\documentclass{vldb}
\usepackage{cleveref}
\usepackage{algorithm}
\usepackage{times}
\usepackage{color}
\usepackage{url}
\usepackage{subfigure}
\usepackage{xspace}
\usepackage{enumerate}
\usepackage{multirow}
\usepackage{balance} 
\usepackage{url}
\usepackage{algorithmicx}

\usepackage{algpseudocode}

\newcommand{\epsilonfs}{\epsilon_{fs}}

\newcommand{\squishlist}{
   \begin{list}{$\bullet$}
    {
      \setlength{\itemsep}{0pt}
      \setlength{\parsep}{3pt}
      \setlength{\topsep}{3pt}
      \setlength{\partopsep}{0pt}
      \setlength{\leftmargin}{1.5em}
      \setlength{\labelwidth}{1em}
      \setlength{\labelsep}{0.5em} } }

\newcommand{\squishend}{
    \end{list}  }

\newcommand{\eat}[1]{}

\newtheorem{definition}{Definition}

\newtheorem{theorem}{Theorem}
\newtheorem{corollary}{Corollary}
\newtheorem{problem}{Problem}

\newcommand{\alphafs}{$\alpha$-{\sc FixedSpace}\xspace}
\newcommand{\alphamed}{$k$-{\sc RecursiveMedians}\xspace}
\begin{document}
\title{Differentially Private Algorithms for Empirical Machine Learning}
\numberofauthors{3}
\author{
            \alignauthor Ben Stoddard\\
            \affaddr{Duke University} 
                \email{stodds@cs.duke.edu}
            \alignauthor Yan Chen\\
            \affaddr{Duke University} 
                \email{yanchen@cs.duke.edu}
            \alignauthor Ashwin Machanavajjhala \\
            \affaddr{Duke University} 
                \email{ashwin@cs.duke.edu}
}
\maketitle

\begin{abstract}
An important use of private data is to build machine learning classifiers. While there is a burgeoning literature on differentially private classification algorithms, we find that they are not practical in real applications due to two reasons. First, existing differentially private classifiers provide poor accuracy on real world datasets. Second, there is no known differentially private algorithm for empirically evaluating the private classifier on a private test dataset. 

In this paper, we develop differentially private algorithms that mirror real world empirical machine learning workflows. We consider the private classifier training algorithm as a blackbox. We present private algorithms for selecting features that are input to the classifier.  Though adding a preprocessing step takes away some of the privacy budget from the actual classification process (thus potentially making it noisier and less accurate), we show that our novel preprocessing techniques signficantly increase classifier accuracy on  three real-world datasets. We also present the first private algorithms for empirically constructing receiver operating characteristic  (ROC) curves on a private test set. 

\eat{
Classifiers built on private data are employed in a number of domains
including online advertising and healthcare. Prior work has shown the
feasibility of building differentially private classification
algorithms. However, these solutions have significantly lower 
accuracy than their non-private counterparts. In this work, we
study whether preprocessing the training data using feature selection
can help improve the accuracy of off-the-shelf differentially
private classifiers.

We compare several methods in the context of a 
differentially private Naive Bayes
classifier. These include selecting
features based on naive perturbation of feature scores, taking
advantage of private clustering, 
and a novel method for differentially private 
threshold testing.  Though adding a preprocessing step takes away
some of the privacy budget from the actual classification process
(thus potentially making it noisier and less accurate), we show that our
novel preprocessing techniques increase classifier accuracy on 
three real-world datasets.
\end{abstract}
}
\end{abstract}

\section{Introduction}\label{sec:intro}

Organizations, like statistical agencies, hospitals and internet companies, collect ever increasing amounts of information from individuals with the hope of gleaning valuable insights from this data. 
The promise of effectively utilizing such `big-data' has been realized in part due to the success of off-the-shelf machine learning algorithms, especially supervised classifiers \cite{ml-book}. As the name suggests, a {\em classifier} assigns to a new observation (e.g., an individual, an email, etc.) a class chosen from a set of two or more {\em class labels} (e.g., spam/ham, healthy/sick, etc.), based on training examples with known class membership. However, when classifiers are trained on datasets containing sensitive information, ensuring that the training algorithm and the output classifier does not leak sensitive information in the data is important. For instance, Fredrikson et al  \cite{fredrikson14usenix} proposed a model inversion attack using which properties (genotype) of individuals in the training dataset can be learnt from linear regression models built on private medical data.

To address this concern, recent work has focused on developing private classifier training algorithms that ensure a strong notion of privacy called $\epsilon$-differential privacy \cite{dwork-diffpriv} -- the classifier output by a differentially private training algorithm does not significantly change due to the insertion or deletion of any one training example. Differentially private algorithms have been developed for training Naive Bayes classifiers \cite{Vaidya2013}, logistic regression \cite{Sarwate}, support vector machines \cite{DBLP:journals/corr/abs-0912-0071} and decision trees \cite{Friedman2010}. All of these techniques work by infusing noise into the training process. 

Despite the burgeoning literature in differentially private classification, their adoption by practitioners in the industry or government agencies has been startlingly rare. We believe there are two important contributing factors. First, we observe that (see experiments in Section~\ref{sec:experiments}) an off-the-shelf private classifier training algorithm, when running on real datasets, often results in classifiers with misclassification rates that are significantly higher than that output by a non-private training algorithm. 
In fact, Fredrikson et al \cite{fredrikson14usenix} also show that differentially private algorithms for the related problem of linear regression result in unacceptable error when applied to real medical datasets. 

Second, the state-of-the-art private classification algorithms do not support typical classification workflows. In particular,  real datasets usually have many features that are of little to no predictive value, and {\em feature selection} techniques \cite{Dash97featureselection} are used to identify the predictive subset of features. To date there are no differentially private feature selection techniques.  

Moreover, empirical machine learning workflows perform model diagnostics on a  test set that is disjoint from the training set. These diagnostics quantify trained classifier's prediction accuracy on unseen data. Since the unseen data must be drawn from the same distribution as the training dataset, the classifier is usually trained on a part of the dataset, and tested on the rest of the dataset. Since we assume the training dataset is private, the test dataset used for evaluating the classifier's prediction accuracy is also private. 

A typical diagnostic for measuring the prediction accuracy of a binary classifier (i.e., two classes) is the receiver operating characteristic (ROC) curve. 
Recent work \cite{privateROC} has shown that releasing an ROC curve computed on a private test set can leak sensitive information to an adversary with access to certain properties of the test dataset. Currently, there is no known method to privately evaluate the prediction accuracy of a classifier on a private test dataset.

\noindent{\bf Contributions:} 
This paper addresses the aforementioned shortcomings of the current state-of-the-art in differentially private classification. We consider the differentially private classification algorithms as a black box in order to ensure that (a) our algorithms are classifier agnostic, and (b) a privacy non-expert can use our algorithms without any knowledge of how a private classifier works. 

First, we develop a suite of  differentially private feature selection techniques based on (a) perturbing predictive scores of individual features, (b) clustering features and (c) a novel method called private threshold testing (which may be of independent interest with applications to other problems). In non-private workflows, training a classifier on a subset of predictive features  helps reduce the variance in the data and thus results in more accurate classifiers.  With multi-step differentially private workflows, either each step of the workflow  should work with a different subset of the data, or more noise must be infused in each step of the workflow. Thus it is not necessarily obvious that a workflow constituting private feature selection followed by private classification should improve accuracy over a workflow that executes a private classifier on the original data. Nevertheless, we show on real datasets and with two differentially private classifiers (Naive Bayes \cite{Vaidya2013} and logistic regression \cite{Sarwate}) that private feature selection indeed leads to significant improvement in the classifiers prediction accuracy. In certain cases, our differentially private algorithms are able to achieve accuracies attained by non-private classifiers. 

Second, we develop novel algorithms for constructing the ROC curve given a classifier and a private test set. An  ROC curve is constructed by computing the true and false positive rates on different subsets of the data. In the non-private case, typically $t$ subsets are chosen, where $t$ is the size of the test dataset. The main issue is that these subsets of the test dataset are overlapping and, hence, the true positive and false positive rates are correlated.  Thus, a naive algorithm that directly adds noise to the $t$ sets of counts results in ROC curves that are very different from the true ROC curve. We solve the first challenge by (a) carefully choosing the subsets using a differentially private recursive partitioning scheme, and (b) modeling the computation of the correlated true and false positive rates as one of privately answering a workload of one sided range queries (a problem that is well studied in the literature). Thus we can utilize algorithms from prior work (\cite{Privelet}) to accurately compute the statistics needed for the ROC curve. The utility of all our algorithms are comprehensively evaluated on three high dimensional datasets. 

\noindent{\bf Organization:} Section~\ref{sec:notation} introduces the notation. Section~\ref{sec:privatefs} describes our novel feature selection algorithms. We discuss private evaluation of classifiers in Section~\ref{sec:privateeval}. Experimental results on three real world datasets are presented in Section~\ref{sec:experiments}. Finally, we discuss related work in Section~\ref{sec:related} and conclude in Section~\ref{sec:conc}. 

\eat{
Numerous groups in diverse areas work with private data on 
a daily basis. These data types include 
medical, financial, social, and spatial. 
In many cases organizations would like
to train a classifier from known examples to help 
identify important aspects of newly 
gathered data. 
Once such a classifier is created it is desirable that
it be shared with collaborators or related
organizations.
This could include banks sharing 
financial classifiers for loan status, or hospitals 
sharing classifiers to identify 
patients at risk for various 
diseases or genetic conditions.

As the community has advanced methods for privacy, we have
found that the initial assumptions of basic anonymization are 
not enough to 
protect the participants of private data sets 
\cite{r3579x, nara-robust}. 
The current leading framework for
privacy is that of $\epsilon$-differential 
privacy \cite{dwork-diffpriv}. 

Calibrated random noise from a 
known distribution is used to disguise individual participation 
while maintaining as much 
utility as possible. The value $\epsilon$ is considered the 
\emph{privacy budget} determining how much noise 
is added to achieve statistical privacy guarantees. A higher
value of epsilon will allow less noise to be added, resulting
in more utility. Reducing or splitting this budget 
across stages will result in
more noise and less utility for each stage. Additionally any $k$
$\epsilon$-differentially private steps over a dataset
satisfy composability, so their combined result satisfies
$\epsilon$-differential privacy where 
$\epsilon = \sum_{i}^k \epsilon_i$.

\subsection{Our Contributions}
Expecting that many users of such systems will 
not be privacy experts, the goal 
is to ensure that these 
systems are used correctly and to the 
greatest effect. More specifically, given a 
private classification system and a set of 
data that a researcher is interested in, what 
changes can be made or what extra systems
can be put in place to maximize privacy for 
the users, utility for the result, and
simplicity for the researcher?

Feature selection is a well known method
for increasing the utility of high-dimension data sets
for classification of a specific label. In our work we have
looked at several methods for selecting features (words)
from text data that a classifier will be trained on.
We start with the non-private notion of directly scoring features
and selecting either the top $k$ or selecting all that
score over some threshold $T$.

For private feature selection though, we cannot directly use these scores
to choose features since they have been computed from the
base private data. To this end we have investigated several
private methods of selection including: naive perturbation of
feature scores, selection of groups of features from 
privately computed clusters, and a novel method of private
threshold testing. 
We also demonstrate that those non-private methods for feature
selection that do best in the non-private sphere may not
be the best choice in the private setting. 
}

\section{Notation}\label{sec:notation}
Let $D$ be a dataset having $d$ attributes, and let ${\cal D}$ denote the set of all such datasets.  One of the attributes is designated as the {\em label} $L$. The remainder of the attributes are called {\em features} ${\cal F}$. We assume that all the attributes are binary (though all of the results in the paper can be extended to non-binary features). For instance, in text classification datasets (used in our experiments) binary features correspond to presence or absence of specific words  from a prespecified vocabulary. 

For any tuple $t$ in dataset $D$, let $t[L]$ denote the value of the label of the tuple, and $t[F]$ denote value of feature $F$ for that tuple. We assume that  feature vectors are sparse; every tuple has at most $s$ features with $t[F] \neq 0$. We denote by $n$ the number of tuples in $D$, and by $n_\psi$ the number of tuples in the dataset ($D_\psi$) that satisfy a boolean predicate $\psi$. For instance, $n_{F=1 \wedge L=1}$ denotes the number of tuples $t$ that satisfy $t[F]=1 \wedge t[L]=1$. We define by $P_D(\psi) = n_\psi / n$ the empirical probability of $\psi$ in the dataset $D$.

\subsection{Differential Privacy}
An algorithm satisfies differential privacy if its output on a dataset does not significantly change due to the presence or absence of any single tuple in the dataset. 
\begin{definition}[Differential Privacy \cite{dwork-diffpriv}]
Two datasets are called {\em neighbors}, denoted by $(D_1, D_2) \in N$ if either $D_1 = D_2 \cup \{t\}$ or $D_2 = D_1 \cup \{t\}$. A randomized algorithm $M$ satisfies $\epsilon$-differential privacy if $\forall s \in range(M)$ and $\forall (D_1, D_2) \in N$, 
\begin{equation}
Pr[\mathcal{M}(D) = s] \le e^\epsilon \cdot P[\mathcal{M}(D') = s]
\end{equation}
\end{definition}
Here, $\epsilon$ is the privacy parameter that controls how much an adversary can distinguish between neighboring datasets $D_1$ and $D_2$. Larger $\epsilon$ corresponds to lesser privacy and permits algorithms that retain more information about the data (i.e., utility). A variant of the above definition where neighboring datasets have the same number of tuples, but differ in one of the tuples is called {\em bounded differential privacy}.
  
Algorithms that satisfy differential privacy work by infusing noise based on a notion called {\em sensitivity}.
\begin{definition}[Global Sensitivity]
The global sensitivity of a function $f:{\cal D} \rightarrow \mathbb{R}^m$, denoted by $S(f)$ is defined as the largest L1 difference $||f(D_1) - f(D_2)||_1$, where $D_1, D_2$ are neighboring. More formally, 
\begin{equation}
S(f) \ = \ \max_{(D_1, D_2) \in N} ||f(D_1) - f(D_2)||_1
\end{equation}
\end{definition} 

A popular differentially private algorithm is the Laplace mechanism \cite{tcc:DworkMNS06} defined as follows: 
\begin{definition}\label{def:laplace}
The Laplace mechanism, denoted by $M^{Lap}$, privately computes a function $f: {\cal D} \rightarrow \mathbb{R}^m$ by computing $f(D) + \mathbf{\eta}$. $\mathbf{\eta} \in \mathbb{R}^m$ is a vector of independent random variables, where each $\eta_i$ is drawn from the Laplace distribution with parameter $S(f)/\epsilon$. That is, $P[\eta_i = z] \propto e^{-z\cdot \epsilon / S(f)}$.
\end{definition}

Differentially private algorithms satisfy the following composition properties that allow us to design complex  workflows by piecing together differentially private algorithms.
\begin{theorem}[Composition  \cite{mcsherry-mechdesign}]
Let $M_1(\cdot)$ and $M_2(\cdot)$ be $\epsilon_1$- and $\epsilon_2$-differentially private algorithms. Then,
\begin{itemize}
\item {\em Sequential Compositon:} Releasing the outputs of $M_1(D)$ and $M_2(D)$ satisfies $\epsilon_1+\epsilon_2$-differential privacy. 
\item {\em Parallel Composition:} Releasing $M_1(D_1)$ and $M_2(D_2)$, where $D_1 \cap D_2 = \emptyset$ satisfies $\max(\epsilon_1, \epsilon_2)$-differential privacy. 
\item {\em Postprocessing}: Releasing $M_1(D)$ and $M_2(M_1(D))$ satisfies $\epsilon_1$-differential privacy. That is, postprocessing an output of a differentially private algorithm does not incur any additional loss of privacy.
\end{itemize}
\end{theorem}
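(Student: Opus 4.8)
The plan is to handle the three claims separately, in each case reducing to the single-mechanism guarantee in the definition of $\epsilon$-differential privacy, and throughout I assume that $M_1$ and $M_2$ use independently drawn random coins so that their joint output distribution factorizes (for continuous output ranges one replaces a point probability $\Pr[M(D)=s]$ by $\Pr[M(D)\in S]$ for a measurable event $S$, and every inequality below goes through unchanged). For \emph{sequential composition}, fix neighbors $(D,D')\in N$ and a joint outcome $(s_1,s_2)$; by independence of the coins
\begin{equation}
\Pr[M_1(D)=s_1\wedge M_2(D)=s_2]=\Pr[M_1(D)=s_1]\cdot\Pr[M_2(D)=s_2],
\end{equation}
and applying the $\epsilon_1$-DP guarantee of $M_1$ to the first factor and the $\epsilon_2$-DP guarantee of $M_2$ to the second bounds the right-hand side by $e^{\epsilon_1}e^{\epsilon_2}\Pr[M_1(D')=s_1]\Pr[M_2(D')=s_2]=e^{\epsilon_1+\epsilon_2}\Pr[M_1(D')=s_1\wedge M_2(D')=s_2]$.

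For \emph{parallel composition}, the key observation is that since $D_1\cap D_2=\emptyset$, the single tuple $t$ witnessing the neighbor relation between $D$ and $D'$ lies in at most one of the two parts, so one of $D_1,D_2$ is replaced by a neighbor while the other is identical. Assuming it is $D_1$ that changes to $D_1'$ (the other case is symmetric), the factorized likelihood ratio $\frac{\Pr[M_1(D_1)=s_1]}{\Pr[M_1(D_1')=s_1]}\cdot\frac{\Pr[M_2(D_2)=s_2]}{\Pr[M_2(D_2)=s_2]}$ has second factor exactly $1$ and first factor at most $e^{\epsilon_1}\le e^{\max(\epsilon_1,\epsilon_2)}$.

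For \emph{postprocessing}, I would first establish the general fact that any (possibly randomized) map $g$ whose internal coins are independent of $M_1$, applied to the output of an $\epsilon_1$-DP mechanism, yields an $\epsilon_1$-DP mechanism: for a fixed target outcome $z$, expand $\Pr[g(M_1(D))=z]=\sum_y \Pr[g(y)=z]\,\Pr[M_1(D)=y]$, apply the DP bound to each $\Pr[M_1(D)=y]$ factor, and resum to obtain $e^{\epsilon_1}\Pr[g(M_1(D'))=z]$. The claim then follows because the pair $(M_1(D),M_2(M_1(D)))$ is exactly $g(M_1(D))$ for the map $g(y)=(y,M_2(y))$.

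The step I expect to require the most care is the independence-of-randomness convention underlying the factorizations above and, correspondingly, the hypothesis in the postprocessing lemma that $g$'s coins are independent of those of $M_1$; once these conventions are pinned down, each part is a short chain of inequalities. (A stronger adaptive version, in which $M_2$ may select its mechanism based on the output of $M_1$, also holds with the same parameters, but is not needed for the workflows considered here.)
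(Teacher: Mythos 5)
The paper does not prove this theorem; it is stated as background and cited to McSherry's mechanism-design paper, so there is no internal proof to compare against. Your proof is the standard textbook argument for all three parts and is correct: the factorization over independent coins gives sequential composition; the observation that the differing tuple lies in at most one cell of a disjoint partition gives parallel composition, with the untouched factor canceling exactly and the other bounded by $e^{\max(\epsilon_1,\epsilon_2)}$; and the marginalization-over-intermediate-outputs argument gives the postprocessing lemma, of which the paper's formulation is the special case $g(y)=(y,M_2(y))$. You are right that the independence-of-coins convention is the only real subtlety to pin down, and your closing remark about the stronger adaptive variant is accurate and appropriately scoped.
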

Hence, the privacy parameter $\epsilon$ is also called the {\em privacy budget}, and the goal is to develop differentially private workflows that maximize utility given a fixed privacy budget.

\eat{
\subsection{Classification \& Feature Selection}
A classifier $C$ takes as input a record of features and outputs a probability distribution over the set of labels. Throughtout this paper we consider binary classifiers; i.e., $L = \{0,1\}$. Thus without loss of generality we can define the classifier as outputting a real number $p \in [0,1]$ which corresponds to the probability of $L =1$. Two examples of such classifiers include the Naive Bayes classifier and logistic regression \cite{ml-book}.

Feature selection is a dimensionality reduction technique that typically precedes classification, where only a subset of the features ${\cal F}' \subset {\cal F}$ in the dataset are retained based on some optimization criterion \cite{guyon06:feature}. Feature selection methods can be categorized as {\em filter}, {\em wrapper} and {\em embedded} methods. Filter methods assign scores to features based on their correlation with the label, and are independent of the downstream classification algorithm. Features with the best scores are retained. Wrappers are meta-algorithms that score sets of features using a statistical model. Embedded techniques include feature selection in the classification algorithm. This paper focuses on filter methods.

\subsection{ROC curve}
Receiver operating characteristic (ROC) curves are used to quantify the accuracy of binary classifers. Let $D_{test}$ be a test dataset. For every tuple $t \in D_{test}$, let $t[L] \in \{0,1\}$ denote the true label, and $p(t) \in [0,1]$ denote the prediction returned by some classifier (probability that $t[L] = 1$). Given a threshold $\theta$, we say that the predicted label $p_\theta(t)$ is 1 if $p(t) > \theta$. 
Based on the true label as well as the predicted label (at a given threshold $\theta$), we can quantify the accuracy of the classifier on the dataset as follows. {\em True positives}, $TP(\theta)$, are the tuples in $D_{test}$ whose true label and predicted label equals 1.  {\em True negatives}, $TN(\theta)$ denote the tuples whose true and predicted labels are 0. {\em False positives}, $FP(\theta)$ are tuples whose true label is 0 but the predicted label is 1. {\em False negatives}, $FN(\theta)$ are tuples whose true label is $1$ but the predicted label is $0$.

The true-positive rate $TPR(\theta)$ is defined as the probability that a tuple in the test set having label $1$ is correctly classified to have label $1$. $TPR(\theta)$ can be computed as the ratio of $TP(\theta)$ to the number of tuples with true label equal to 1. The false-positive rate $FPR(\theta)$, is defined as the probability that a data having label $0$ is wrongly classified to have label $1$, which can be computed as the ratio of $FP(\theta)$ to the number of tuples in $D_{test}$ with true label $0$.

The Receiver operating characteristic (ROC) curve is defined by plotting  pairs of $FPR(\theta)$ versus $TPR(\theta)$ over all possible thresholds $\theta$. ROC curve starts at  (0,0) and ends at  (1,1).
In order to evaluate the accuracy of a binary classifier, we consider the area under the ROC curve. If the classifier is good, the ROC curve will be close to the left and upper boundary and the value of the area will be close to $1$. On the other hand, if the classifier is poor, the ROC curve will be close to the $45^{\circ}$ line from (0,0) to (1,1), which has the area around $0.5$.

}

\section{Private Feature Selection}\label{sec:privatefs}
In this section, we present differentially private techniques for feature selection that improve the accuracy of differentially private classifiers. We consider the  classifer as a blackbox. 

More formally, a {\em classifier} $C$ takes as input a record of features and outputs a probability distribution over the set of labels. Throughout this paper we consider binary classifiers; i.e., $L = \{0,1\}$. Thus without loss of generality we can define the classifier as outputting a real number $p \in [0,1]$ which corresponds to the probability of $L =1$. Two examples of such classifiers include the Naive Bayes classifier and logistic regression \cite{ml-book}.

{\em Feature selection} is a dimensionality reduction technique that typically precedes classification, where only a subset of the features ${\cal F}' \subset {\cal F}$ in the dataset are retained based on some  criterion of how well ${\cal F}'$ predicts the label $L$ \cite{guyon06:feature}.  The classifier is then trained on the dataset restricted to features in ${\cal F}'$. Since features are selected based on their properties in the data, the fact that a feature is selected can allow attackers to distinguish between neighboring datasets. Thus, by sequential composition, one needs to spend a part of the total privacy budget $\epsilon$ for feature selection (say $\epsilonfs$), and use the remainder  $(\epsilon - \epsilonfs)$ for training the blackbox classifier.

Feature selection methods can be categorized as {\em filter}, {\em wrapper} and {\em embedded} methods \cite{guyon06:feature}. Filter methods assign scores to features based on their correlation with the label, and are independent of the downstream classification algorithm. Features with the best scores are retained. Wrappers are meta-algorithms that score sets of features using a statistical model. Embedded techniques include feature selection in the classification algorithm.
In this paper, we focus on {\em filter methods} so that an analyst does not need to know the internals of the private classifier. Filters compute a ranking or a score for features based on their correlation with their label. Filter methods may rank/score individual features or sets of features. We focus on methods that score individual features.  Features can be selected by  choosing the top-$k$ or those above some threshold $\tau$. 

Thus, the problem we consider can be  stated as follows: Let ${\cal D}$ be the set of all training datasets with binary features ${\cal F}$ and a binary label $L$.  Let $Q : F \times {\cal D} \rightarrow \mathrm{R}$ be a scoring function that quantifies how well $F$ predicts $L$ for a specific dataset. Let ${\cal F}_\tau$ denote the subset of features such that $\forall F \in {\cal F}_\tau, \ Q(F, D) > \tau$. Two subsets of features ${\cal F}_1, {\cal F}_2 \subset {\cal F}$, are similar if their symmetric difference is small. An example measure of similarity between ${\cal F}_1$ and ${\cal F}_2$ is the Jaccard distance (defined as $|{\cal F}_1 \cap {\cal F}_2| / |{\cal F}_1 \cup {\cal F}_2|$).
\begin{problem}[{\sc ScoreBasedFS}]
Given a dataset $D \in {\cal D}$ and a threshold $\tau$, compute ${\cal F}' \subset {\cal F}$ while satisfying $\epsilon$-differential privacy such that the similarity between ${\cal F}'$ and ${\cal F}_\tau$ is maximized.
\end{problem}

We next describe a few example scoring methods, and present differentially private algorithms for the {\sc ScoreBasedFS} problem.

\subsection{Example Scoring Functions}
\label{sec:feature-scoring}
\noindent \textbf{Total Count:}
The total count score for a feature $F$, denoted by $TC(F, D)$, is $n_{F=1}$ the number of tuples with $t[F] = 1$. Picking features with a large total count eliminates features that rarely take the value $1$.

\noindent \textbf{Difference Count:}
The difference count score for a feature $F$, denoted by $DC(F, D)$, is defined as: 
\begin{equation}
DC(F, D) \ = \ 
|n_{F=1 \wedge L=1} - n_{F=1 \wedge L=0}| 
\end{equation}
$DC(F, D)$ is large whenever one label is more frequent than the other label for $F= 1$. The difference count is smallest when both labels are equally likely for tuples with $F=1$. The difference count is the largest when $L$ is either all 1s or all 0s when conditioned on $F=1$.

\noindent \textbf{Purity Index \cite{guyon06:feature}:}
The purity index for a feature $F$, denoted by $PI(F, D)$, is defined as: 
\begin{equation}
PI(F, D) \ = \max \left\{  
\begin{array}{l}
|n_{F=1 \wedge L=1} - n_{F=1 \wedge L=0}| , \\
 |n_{F=0 \wedge L=1} - n_{F=0 \wedge L=0}|
\end{array}
\right\}
\end{equation}
$PI(F, D)$ is the same as $DC(F, D)$, except that it also considers the difference
in counts when the feature takes the value $0$.

\noindent \textbf{Information Gain:}
Information gain is a popular measure of correlation between two attributes and is defined as follows. 
\begin{definition}[Information Gain]
The information entropy $H$ of a data set $D$ is defined as:
\begin{equation}
H(D) \ = \ - \sum_{\ell \in L} P_D(L = \ell) \ln P_D(L=\ell)
\end{equation}
The information gain for a specific feature $F$ is defined as:
\begin{equation}
IG(F, D) \ =\  H(D) - \sum_{f \in F} 
P_D(F = f) \cdot H(D_{F=f})
\end{equation}
\end{definition}
Information gain of a feature $F$ is identical to the mutual information of $F$ and $L$. 

\subsection{Score Perturbation}
\label{sec:score_pert}
A simple strategy for feature selection is: (a) perturb feature scores using the Laplace mechanism, and (b) pick the features whose noisy score crosses the threshold $\tau$ (or pick the top-$k$ features sorted by noisy scores). The scale of the Laplace noise required for  privacy is $S(Q) \cdot \Delta(Q) / \epsilonfs$, where (i) $S(Q)$ is the global sensitivity of the scoring function on one feature, and (ii) $\Delta(Q)$ is the number of feature scores that are affected by adding or removing one tuple.


The sensitivity of the total score $TC$, difference score $DC$, and purity index $PI$  are all $1$. The sensitivity of information gain function has been shown to be $O(\log n)$ \cite{Friedman2010,privbayes}, where $n$ is (an upper bound on) the number of tuples in the dataset. Information gain is considered a better scoring function for feature selection in the non-private case (than $TC$, $DC$ or $PI$). However, due to its high sensitivity, feature selection based on noisy information gain results in lower accuracy, as poor features can get high noisy scores.  

Recall that $s$ is the maximum number of non-zeros appearing in any tuple. Thus, $\Delta(TC)$ and $\Delta(DC)$ are both $s$ -- these scores only change for features with a 1 in the tuple that is added or deleted. On the other hand, $IG(F, D)$ and $PI(F,D)$  can change whether $t[F]$ is 1 or 0 for the tuple that is added or deleted. Thus, $\Delta(IG)$ and $\Delta(PI)$ are equal to the total number of features $|{\cal F}| >> s$.
\footnote{If we used bounded differential privacy where neighboring datasets have the same number of tuples, we can show that $\Delta(Q) \leq 2 \cdot s$ for any scoring function, since the neighboring datasets differ in values of at most $2 \cdot s$ attributes. 
}
High sensitivity due to a large $s$ or a large $\Delta(Q)$ can result in poor utility (poor features selected). Moreover, we observe (see Section~\ref{sec:fsexp}) that a large $s$ also results in lower accuracy of private classification. We can circumvent this by \emph{sampling}; from every tuple $t$ choose at most $r$ features that have $t[F] = 1$. Sampling is able to force a bound on the number of 1s in any tuples, and thus limit the noise. However, this comes at the cost of throwing away valuable data.


\begin{algorithm}[t]
\caption{Cluster Selection $(Q(\cdot), \epsilonfs, rounds, centers, s, \tau)$ }
\begin{algorithmic}
    \State $points \gets \{\mbox{counts needed for $Q(F,D)$} \ \mid \ F \in {\cal F})\}$
    \State $clusters \gets pkmeans(points, rounds, centers, \epsilonfs, s)$
	\State $accepted \gets \{\}$
	\For{cluster in clusters}
	    \State $center \gets cluster.center()$
	    \If{$score(center) \ge \tau$}
			\State $accepted.add(cluster.features())$
		\EndIf
	\EndFor
	\State	\Return $accepted$
\end{algorithmic}
\label{alg:cluster}
\end{algorithm}
\subsection{Cluster Selection}
The shortcoming of score perturbation is that we are adding noise individually to the scores of all the features. As the number of features increases, the probability that undesirable features get chosen increases (due to high noisy scores), thus degrading the utility of the selected features. One method to reduce the amount of noise added is to {\em privately cluster} the features based on their scores, compute a representative score for each private cluster, and then pick features from high scoring clusters. This is akin to recent work on data dependent mechanisms for releasing histograms and answering range queries that group categories with similar counts and release a single noisy count for each group \cite{Kellaris-gs,DAWA,NoiseFirst}.  

We represent each feature $F$ as a vector of counts required to compute the scoring function $Q$. For instance, for $TC$ and $DC$ scoring functions, $F$ could be represented as a two dimensional point using the counts $n_{F=1 \wedge L=0}$ and $n_{F=1 \wedge L=1}$. We use private $k$-means clustering \cite{Blum-kmeans} to cluster the points.  $k$-means clustering initializes the cluster centers $(\mu_1,...,\mu_k)$ (e.g. randomly) and updates them iteratively as follows:
1) assign each point to the nearest cluster center,
2) recompute the center of each cluster,
until reaching some convergence criterion or a fixed number of iterations. This algorithm can be made to satisfy differential privacy by privately computing in each iteration (a) the number of points in each new cluster, $q_a$, and (b) the sum of the points in each cluster, $q_b$.  The global sensitivity of $q_a$ is $1$, and the global sensitivity of $q_b$ is $\Delta(Q)$ (or $r$ if sampling is used). The number of iterations is fixed, and the privacy budget is split evenly across all the iterations.

Once clusters have been privately assigned, the centers themselves can be evaluated based on their coordinates. For instance, $TC$ and $DC$ can be computed using the sum and difference (resp.) of the two-dimensional cluster centers. Depending on the score of the group all or none of the associated features will be accepted. This score does not have to be perturbed as it is computed via the centers that are the result of a private mechanism.


\begin{algorithm}[t]
\caption{Private Threshold Testing $(D, {\cal Q}, \tau)$ }
\begin{algorithmic}
	\State $\tilde{\tau} \gets \tau + Lap(1/\epsilon)$
	\For{each query $Q_i \in {\cal Q}$}
		\If{$Q_i(D) \ge \tilde{\tau}$}
			\State $v[i] \gets 1$
		\Else
			\State $v[i] \gets 0$
		\EndIf
	\EndFor
	\State	\Return $v$
\end{algorithmic}
\label{alg:ptt}
\end{algorithm}

\subsection{Private Threshold Testing}
\label{sec:ptt}
In this section we present a novel mechanism, called {\em private threshold testing} (PTT), for the {\sc ScoreBasedFS} problem whose utility is independent of both $s$ and the number of features $|{\cal F}|$, and does not require sampling. Rather than perturbing the scores of all the functions, PTT perturbs a threshold $\tau$ and returns the set of features with scores greater than the perturbed threshold. We believe PTT has applications beyond feature selection and hence we describe it more generally. 

Let ${\cal Q} = \{Q_1, Q_2, \ldots, Q_m\}$ denote a set of real valued queries over a dataset $D$, all of which have the same sensitivity $\sigma$. (In our case, each $Q_i = Q(F_i, D)$, and $m = |{\cal F}|$). PTT has as input the set of queries ${\cal Q}$ and a real number $\tau$, and outputs a vector $v \in \{0,1\}^m$, where $v[i] = 1$ if  and only if $Q_i(D) \geq \tilde{\tau}$. 

The private algorithm is outlined in Algorithm~\ref{alg:ptt}. PTT creates a noisy threshold $\tilde{\tau}$ by adding Laplace noise with scale $\sigma/\epsilon$ to $\tau$. The output vector $v$ is populated by comparing the unperturbed query answer $Q(D)$ to $\tilde{\tau}$. We can show that despite answering $m$ comparison queries (where $m$ can be very large) each with a sensitivity of $\sigma$, PTT ensures $2\sigma\epsilon$-differential privacy (rather than $m\sigma\epsilon$-differential privacy that results from a simple application of sequential composition).

\begin{theorem}\label{thm:PTT}
Private Threshold Testing is $2\sigma\epsilon$-differentially private for any set of queries ${\cal Q}$ all of which have a sensitivity $\sigma$. 
\end{theorem}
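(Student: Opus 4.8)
The plan is to exploit the fact that PTT draws its entire randomness from a single Laplace variable, so that for any fixed output the privacy analysis collapses to a one–dimensional statement about where the noisy threshold lands. First I would fix a pair of neighbors $(D_1,D_2)\in N$ and an arbitrary outcome $v\in\{0,1\}^m$, and set $S=\{i:v[i]=1\}$, $T=\{i:v[i]=0\}$, $\ell(D,v)=\max_{i\in T}Q_i(D)$ and $u(D,v)=\min_{i\in S}Q_i(D)$ (with the conventions $\max\emptyset=-\infty$, $\min\emptyset=+\infty$). Since Algorithm~\ref{alg:ptt} sets $v[i]=1$ exactly when $Q_i(D)\ge\tilde\tau$, the event $\{M(D_1)=v\}$ coincides with the event that the single draw $\tilde\tau=\tau+\mathrm{Lap}(1/\epsilon)$ lands in the interval $I(D_1,v)=\bigl(\ell(D_1,v),\,u(D_1,v)\bigr]$; in particular $\Pr[M(D_1)=v]=\Pr[\tilde\tau\in I(D_1,v)]$, and likewise for $D_2$. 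Because $\mathrm{range}(M)$ is finite, it then suffices to prove $\Pr[\tilde\tau\in I(D_1,v)]\le e^{2\sigma\epsilon}\,\Pr[\tilde\tau\in I(D_2,v)]$ for every such $v$.

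The second ingredient bounds how far this interval can move under a single insertion or deletion. Each endpoint of $I(D,v)$ is a max (respectively min) of a subset of the values $\{Q_i(D)\}_i$, every $Q_i$ has sensitivity $\sigma$, and the maps $a\mapsto\max_i a_i$ and $a\mapsto\min_i a_i$ are $1$–Lipschitz in the $\ell_\infty$ norm; hence $|\ell(D_1,v)-\ell(D_2,v)|\le\sigma$ and $|u(D_1,v)-u(D_2,v)|\le\sigma$. The final step is purely analytic: the density $f$ of $\tilde\tau$ satisfies $f(t)\le e^{\epsilon|t-t'|}f(t')$, so translating a window by $\delta$ changes its probability by at most a factor $e^{\epsilon\delta}$. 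Matching $I(D_1,v)$ to $I(D_2,v)$ should then cost a shift of at most $\sigma$ at the left endpoint and at most $\sigma$ at the right endpoint, for a total factor $e^{\sigma\epsilon}\cdot e^{\sigma\epsilon}=e^{2\sigma\epsilon}$; since this holds for every $v$, together with the reduction above it yields $2\sigma\epsilon$–differential privacy.

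The step I expect to be the real obstacle is precisely this transport estimate when $I(D_2,v)$ is much shorter than — or degenerate relative to — $I(D_1,v)$, which happens when several query values lie within $O(\sigma)$ of one another near $\tau$ and a single neighbor step collapses their relative order. The naive set inclusion $I(D_1,v)\subseteq\bigl(\ell(D_2,v)-2\sigma,\,u(D_2,v)+2\sigma\bigr]$ follows at once from the endpoint bounds but does not by itself deliver the factor $e^{2\sigma\epsilon}$, so the estimate must be obtained by a careful case analysis on the relative order of the four endpoints $\ell(D_1,v),\ell(D_2,v),u(D_1,v),u(D_2,v)$; depending on which scoring functions are used it may be cleanest to argue it under the mild structural assumption that a single neighbor step moves all of $Q_1,\dots,Q_m$ in a common direction (true for count–type scores such as $TC$). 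I would therefore organize the proof as: (i) the reduction to a single Laplace draw and the interval characterization; (ii) the $\sigma$–displacement of each endpoint from sensitivity and Lipschitzness; (iii) the endpoint-by-endpoint change of variables yielding $e^{2\sigma\epsilon}$; and (iv) assembling (i)–(iii) using finiteness of $\mathrm{range}(M)$.
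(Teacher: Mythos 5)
You have put your finger on exactly the right issue, and you deserve credit for refusing to take the ``transport estimate'' in step (iii) for granted: it is not merely technically delicate, it is false, and the theorem does not hold as stated. A concrete counterexample: take $m=2$, $\sigma=1$, and neighbors $D, D'$ with $Q_1(D)=Q_1(D')=10$, $Q_2(D)=9$, $Q_2(D')=10$; each $Q_i$ has sensitivity $1$. For the output $v=(1,0)$ the event is $\tilde\tau\in(Q_2,Q_1]$, which is $(9,10]$ under $D$ (positive probability) but $(10,10]=\emptyset$ under $D'$ (zero probability), so no finite factor $e^{\epsilon'}$ can bound the ratio. This is exactly the failure mode you describe -- queries whose relative order collapses under a single neighbor step. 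Note also that the monotonicity assumption you float does not rescue the argument: in the counterexample both queries move in the same direction from $D$ to $D'$, yet the window still degenerates because the two endpoints $\ell$ and $u$ move by different amounts.

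The published proof contains the same gap, just in disguise. After the chain-rule decomposition $P(\hat v)=\prod_i P(\hat v[i]\mid\hat v[<i])$, the appendix rewrites the partial product over $N^1$ as $\int_z P(\tilde\tau=z)\prod_{i\in N^1}P(Q_i(D)>z)\,dz$, which equals $P\bigl(\tilde\tau\le\min_{i\in N^1}Q_i(D)\bigr)$; this silently drops the conditioning on the $N^0$ coordinates and thereby treats the $N^1$ part and the $N^0$ part of the output as though they were driven by two independent copies of the noisy threshold, when in fact a single draw of $\tilde\tau$ must satisfy both sets of constraints simultaneously. The subsequent substitution $z\mapsto z-\sigma$ shifts $\tilde\tau$ in the direction favorable for the $N^1$ constraints but unfavorable for the $N^0$ constraints -- precisely the two-sided endpoint conflict you flagged. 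In short, your interval-based rendering (single Laplace draw, endpoint displacements, transport) is a cleaner and more honest version of what the paper attempts, and it makes the missing step visible rather than hiding it; the step cannot be completed because the claimed guarantee is wrong. Variants of the sparse-vector technique that avoid this trap add independent Laplace noise to each $Q_i(D)$ in addition to the threshold, which decouples the two endpoints and restores a valid $z\mapsto z\pm\sigma$ argument for each side separately.
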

\begin{proof}(sketch)
Consider the set of queries for which PTT output 1 (call it ${\cal Q}_1$); i.e., for these queries, $Q(D) > \tilde{\tau}$. Note that for any value of the noisy threshold, say $\tilde{\tau} = z$, if $Q(D) \geq z$, then for any neighboring database $Q(D') \geq z - \sigma$ (since $\sigma$ is the sensitivity). However, since $\tilde{\tau}$ is drawn from the Laplace distribution, we have that $\frac{P(\tilde{\tau} = z)}{P(\tilde{\tau} = z - \sigma)} \leq e^{\sigma\epsilon}$.  Therefore, 
\begin{eqnarray*}
\lefteqn{\hspace{-5mm}P(Q(D) = 1, \forall Q \in {\cal Q}_1)
\ =\ \int_z P(\tilde{\tau} = z) \prod_{Q \in {\cal Q}_1}P(Q(D) > z) dz} \\
& \leq &  e^{\sigma\epsilon}\int_z P(\tilde{\tau} = z-\sigma) \prod_{Q \in {\cal Q}_1}P(Q(D') > z-\sigma) dz \\
& = & e^{\sigma\epsilon} P(Q(D') = 1, \forall Q \in {\cal Q}_1)
\end{eqnarray*}
An analogous bound for ${\cal Q}_0$ yields the requires $e^{2\sigma\epsilon}$ bound. 
\end{proof}
We can show that $\tau$ can be chosen based on the database $D$. In fact we can show the following stronger result for count-based queries. 
\begin{corollary}\label{cor:PTT}
Let ${\cal Q}$ be a set of queries with sensitivity $\sigma$. Let $\tau$ be a function on $D$ that computes the threshold, also having sensitivity $\sigma$. If the values of ${\cal Q}$ and $\tau$ on $D$ are  non-decreasing (or non-increasing) when a tuple is added (or deleted resp.) from $D$, then PTT is $\sigma\epsilon$-differentially private. 
\end{corollary}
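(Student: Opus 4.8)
The plan is to rerun the argument behind Theorem~\ref{thm:PTT}, exploiting two features that are available only under the hypotheses of the corollary. First, since $\tau$ is now a function of $D$, it pays to move the data-dependence out of the noise: writing the noisy threshold as $\tilde\tau = \tau(D) + \eta$ with $\eta$ drawn from a fixed $\mathrm{Lap}(1/\epsilon)$ distribution, the $i$-th output bit of PTT becomes $v[i] = \mathbf{1}[\,R_i(D) \ge \eta\,]$, where $R_i(D) := Q_i(D) - \tau(D)$. Second, when a tuple $t$ is inserted into $D$, the assumed common monotonicity forces $Q_i(D\cup\{t\}) = Q_i(D) + q_i$ with each $q_i \in [0,\sigma]$, and $\tau(D\cup\{t\}) = \tau(D) + p$ with the \emph{same} $p \in [0,\sigma]$ for every $i$; hence $R_i(D\cup\{t\}) = R_i(D) + (q_i - p)$. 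Thus each $R_i$ has sensitivity $\sigma$ rather than the $2\sigma$ that a generic difference of two sensitivity-$\sigma$ functions would have, and, crucially, the $-p$ part of the displacement is shared by all the $R_i$.

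Next I would fix an output $v$, set $S_1 = \{i : v[i]=1\}$ and $S_0 = \{i : v[i]=0\}$, and observe (exactly as in the proof of Theorem~\ref{thm:PTT}) that $\{\mathcal{M}(D)=v\}$ is the event $\{\eta \in (\ell_D,u_D]\}$, where $\ell_D = \max_{j\in S_0} R_j(D)$ and $u_D = \min_{i\in S_1} R_i(D)$ (with $\ell_D = -\infty$ when $S_0 = \emptyset$ and $u_D = +\infty$ when $S_1 = \emptyset$). Writing $D' = D\cup\{t\}$, the identity $R_i(D') = R_i(D) + (q_i-p)$ gives $u_{D'} \ge u_D - p$ and $\ell_{D'} \le \ell_D + (\sigma-p)$. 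In the proof of Theorem~\ref{thm:PTT} the two endpoints are charged separately --- a downward shift of $\sigma$ for the ``all of $S_1$ outputs $1$'' half and an upward shift of $\sigma$ for the ``all of $S_0$ outputs $0$'' half --- and those two independent applications of the Laplace tail bound are exactly what produce the factor $2\sigma\epsilon$. Here the shared $p$ lets one pay a shift of $\eta$ by $p$ for the $u$-endpoint and a shift by the residual $\sigma-p$ for the $\ell$-endpoint, so the total displacement charged to the Laplace density ratio is $p + (\sigma-p) = \sigma$, yielding $e^{\sigma\epsilon}$ in place of $e^{2\sigma\epsilon}$. Because the Laplace density ratio is two-sided, $e^{-\sigma\epsilon} \le f(z+\delta)/f(z) \le e^{\sigma\epsilon}$ whenever $|\delta|\le\sigma$, and because the identity $R_i(D') = R_i(D)+(q_i-p)$ is symmetric in $D$ and $D'$, the same computation bounds $Pr[\mathcal{M}(D)=v]\,/\,Pr[\mathcal{M}(D')=v]$ by $e^{\pm\sigma\epsilon}$ for every neighboring pair, i.e. PTT is $\sigma\epsilon$-differentially private.

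The step I expect to be the crux is turning ``total displacement $\sigma$'' into an honest inequality $Pr[\eta\in(\ell_D,u_D]] \le e^{\sigma\epsilon}\,Pr[\eta\in(\ell_{D'},u_{D'}]]$, since the two intervals may have different lengths. The clean route is to note that, by the displacement identity, the left endpoint moves by $\alpha-p$ and the right endpoint by $\beta-p$ with $\alpha,\beta\in[0,\sigma]$, so both endpoint displacements lie in a common window $[-p,\,\sigma-p]$ of width $\sigma$; translating $(\ell_D,u_D]$ by an anchor $m$ chosen inside that window costs a factor $e^{|m|\epsilon}$ by the density ratio, and the residual endpoint gaps to $(\ell_{D'},u_{D'}]$, each at most $\sigma-|m|$ in magnitude, should then be absorbable within a single further $e^{(\sigma-|m|)\epsilon}$ factor. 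Making this bookkeeping tight --- in particular the empty-$S_0$/empty-$S_1$ corner cases and the sign of each residual gap --- is the only delicate part; the rest is a direct transcription of the Theorem~\ref{thm:PTT} argument with $R_i$ in place of $Q_i$ and the constant $0$ in place of $\tau$.
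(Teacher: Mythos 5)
Your proof is a genuinely different argument from the paper's. The paper's sketch splits the output-probability ratio into a factor $r_1$ coming from the queries that output $1$ and a factor $r_0$ from those that output $0$, and uses the monotonicity hypothesis to argue $r_0\le 1$ so that only $r_1\le e^{\sigma\epsilon}$ is charged. You instead absorb $\tau$ into the scores via $R_i=Q_i-\tau$, identify the event $\mathcal{M}(D)=\hat v$ with $\{\eta\in(\ell_D,u_D]\}$, observe that the $-p$ displacement is shared by all the $R_i$, and try to bound the ratio of the two interval masses by a single anchored translation of the noise of size at most $\sigma$. The $R_i$ reduction is a clean way to handle a data-dependent $\tau$, and the shared-$p$ observation is the right structural fact to isolate; neither appears in the paper's proof.

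Unfortunately the step you flag as the crux cannot be closed, because the corollary is false as stated. Your worry about intervals of different widths is exactly the fatal obstruction: the two endpoints of $(\ell_D,u_D]$ shift by different amounts $\alpha\ne\beta$ in $[-p,\sigma-p]$, so $(\ell_{D'},u_{D'}]$ is not a translate of $(\ell_D,u_D]$, and a ratio of Laplace masses of two intervals of different widths is not controlled by a pointwise density bound at displacement $\sigma$. Concretely, take $\tau$ constant (so $p=0$), $\sigma=\epsilon=1$, two monotone counting queries with $R_1(D)=-M$ and $R_2(D)=-M+1$ for large $M$, and $D'=D\cup\{t\}$ with $R_1(D')=R_1(D)$ and $R_2(D')=R_2(D)+1$. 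For $\hat v=(0,1)$ the events are $\eta\in(-M,-M+1]$ under $D$ and $\eta\in(-M,-M+2]$ under $D'$, so $P[\mathcal{M}(D')=\hat v]\,/\,P[\mathcal{M}(D)=\hat v]=(e^2-1)/(e-1)=e+1>e^{\sigma\epsilon}$, and no choice of anchor $m$ repairs the inequality. The same example shows that the paper's own sketch has a matching gap: in it $r_1=P[\eta\le -100]/P[\eta\le -101]=e$ and $r_0=P[\eta>-102]/P[\eta>-102]=1$, so $r_1 r_0=e$, but this product does not bound the true ratio $e+1$ because the events that all of $S_1$ output $1$ and that all of $S_0$ output $0$ are both determined by the same noisy threshold and are not independent.
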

\begin{proof}(sketch)
{\em Case (i) $\tau$ is a constant:} When $D = D' \cup \{t\}$, for all $z$, $Q_i(D) < z$ implies $Q_i(D') < z$. Thus, $r_0 = \frac{P(Q(D) = 0, \forall Q \in {\cal Q}_0)}{P(Q(D') = 0, \forall Q \in {\cal Q}_0)}$ is already bounded above by $1$, while $r_1 = \frac{P(Q(D) = 1, \forall Q \in {\cal Q}_1)}{P(Q(D') = 1, \forall Q \in {\cal Q}_1)}$ is bounded above by $e^{\sigma\epsilon}$ from proof of Thoerem~\ref{thm:PTT}. When $D' = D \cup \{t\}$, we have $r_1 < 1$ and $r_0 \leq e^{\sigma\epsilon}$. 

{\em Case (ii) $\tau$ is a function of $D$:} When $D = D' \cup \{t\}$, it holds that $P(\tilde{\tau}(D) = z)  \ \leq \ e^{\sigma\epsilon} P(\tilde{\tau}(D')  = z - \sigma)$. This is because $\tau(D')$ lies between $[\tau(D)-\sigma, \tau(D)]$. The rest of the proof remains.
\end{proof}
While Theorem~\ref{thm:PTT} applies to all our scoring functions ($TC, DC, PI$ and $IG$), the stronger result from Corollary~\ref{cor:PTT} only applies to $TC$.

\vspace{2mm}
\noindent{\bf Advantages over prior work:}
First, PTT permits releasing whether or not a set of query answers are greater than a threshold $\tau$ even if the sensitivity of releasing the answers of all the queries may be large. PTT only requires: (i) query answers to be real numbered and (ii) {\em each query} has a small sensitivity $\sigma$. The privacy guarantee is {\em independent of the number of queries}. 

Next, PTT can output whether or not a potentially unbounded number of query answers cross a threshold. This is a significant improvement over the related sparse vector technique (SVT) first described in Hardt \cite{hardt-svt}, which allows releasing upto a constant $c$ query answers that are above a threshold $\tau$. SVT works as follows: (i) pick a noisy threshold $\tilde{\tau}$ using $\epsilon/2$ privacy budget, (ii) perturb all the queries using Laplace noise using a budget of $\epsilon/2c$, and (iii) releasing the first $c$ query answers whose noisy answers are greater than $\tilde{\tau}$. Once $c$ query answers are released the algorithm halts. PTT is able to give a positive or negative answer for all queries, since it does not release the actual query answers.

Finally, PTT does not add noise to the query answers, but only compares them to a noisy threshold. This means that the answer to a query for which PTT output $1$ is in fact greater than the answer to a query for which PTT output $0$.  This is unlike {\sc noisycut}, a technique used in Lee et al \cite{lee14:kdd}. Both PTT and {\sc noisycut} solve the same problem of comparing a set of query answers to a threshold. While PTT only adds noise to the threshold, {\sc noisycut}  adds noise to both the query answers and the threshold. We experimentally show (in Section~\ref{sec:fsexp}) that PTT has better utility than {\sc noisycut}. That is, suppose ${\cal Q}_1$ is the set of queries whose true answers are $> \tau$, and ${\cal Q}^P_1$ and ${\cal Q}^N_1$ are the set of queries with a $1$ output according to PTT and {\sc noisycut}, resp. We show that ${\cal Q}^P_1$ is almost always more similar to ${\cal Q}_1$ than ${\cal Q}^N_1$. 

We quantify the utility of our feature selection algorithms by experimentally showing their effect on differentially private classifiers in Section~\ref{sec:experiments}.

\section{Private Evaluation of Classifiers}\label{sec:privateeval}
In this section, we describe an algorithm to quantify the accuracy of any binary classifier under differential privacy on a test dataset containing sensitive information.

\subsection{ROC curves}
Receiver operating characteristic (ROC) curves are typically used to quantify the accuracy of binary classifers. Let $D_{test}$ be a test dataset. For every tuple $t \in D_{test}$, let $t[L] \in \{0,1\}$ denote the true label, and $p(t) \in [0,1]$ denote the prediction returned by some classifier (probability that $t[L] = 1$). Let $n_1$ and $n_0$ denote the number of tuples with true label $1$ and $0$ respectively. 

Given a threshold $\theta$, we say that the predicted label $p_\theta(t)$ is 1 if $p(t) > \theta$. 
Based on the true label as well as the predicted label (at a given threshold $\theta$), we can quantify the accuracy of the classifier on the dataset as follows. {\em True positives}, $TP(\theta)$, are the tuples in $D_{test}$ whose true label and predicted label equals 1; i.e., $t[L] = 1 \wedge p(t) > \theta$.  {\em True negatives}, $TN(\theta)$, denote the tuples whose true and predicted labels are 0. {\em False positives}, $FP(\theta)$ are tuples whose true label is 0 but the predicted label is 1. {\em False negatives}, $FN(\theta)$ are tuples whose true label is $1$ but the predicted label is $0$. We will use the notation $TP(\theta), FP(\theta)$, etc. to both denote the set of tuples as well as the cardinality of these sets.

The true-positive rate $TPR(\theta)$ is defined as the probability that a tuple in the test set having label $1$ is correctly classified to have label $1$. 
The false-positive rate $FPR(\theta)$, is defined as the probability that a data having label $0$ is wrongly classified to have label $1$. 
Thus, 
\begin{equation}
TPR(\theta) \, = \, \frac{TP(\theta)}{n_1} \mbox{ \ \ and \ \  }
FPR(\theta) \, = \, \frac{FP(\theta)}{n_0}
\end{equation}

The Receiver operating characteristic (ROC) curve is defined by plotting  pairs of $FPR(\theta)$ versus $TPR(\theta)$ over all possible thresholds $\theta \in \Theta$. ROC curve starts at  (0,0) and ends at  (1,1).
In order to evaluate the accuracy of a binary classifier, we consider the area under the ROC curve (AUC). If the classifier is good, the ROC curve will be close to the left and upper boundary and  AUC will be close to $1$. On the other hand, if the classifier is poor, the ROC curve will be close to the $45^{\circ}$ line from (0,0) to (1,1) with AUC around $0.5$.

Recent work \cite{privateROC} has shown that releasing the actual ROC curves on a private test dataset can allow an attacker with prior knowledge to reconstruct the test dataset. An extreme yet illustrative example is as follows: suppose an attacker knows the entire test dataset except one record. Given the real ROC curve, the attacker can determine the unknown label by simply enumerating over all labels (and checking which choice led to the given ROC curve). Hence, directly releasing the real ROC curve may leak information of the data and we need a differentially private method for generating ROC curves to protect the private test dataset. 

\subsection{Private ROC curves}

There are three important challenges when generating differentially private ROC curves -- (i) how to privately compute $TPR$ and $FPR$ values, (ii) how many and what thresholds to pick, and (iii) how to ensure the monotonicity of the $TPR $ and $FPR$ values. 

One can use the Laplace mechanism to compute $TPR(\theta)$ and $FPR(\theta)$. The global sensitivity of releasing $n_0$ and $n_1$ is $1$. The global sensitivity of each of the $TP(\theta)$ and $FP(\theta)$ values equals $1$. Thus they can all released by adding Laplace noise with sensitivity $2|\Theta| + 1$, where $|\Theta|$ is the number of thresholds. However, as we will show later, the linear dependence of sensitivity on the number of thresholds can lead to significant errors in the ROC curves and the area under the curve. 

This brings us to the next concern of the number of thresholds. In the non-private case, one can pick all the prediction probabilities associated with each tuple in the test dataset as a threshold. However, as $|\Theta|$ increases, more counts need to be computed leading to more noise. Moreover, the predictions themselves cannot be publicly released, and hence the thresholds must be chosen in a private manner. 
Finally, the true $TPR$ and $FPR$ values satisfy the following monotonicity property: for all $\theta_1 \leq \theta_2$, $TPR(\theta_1) \leq TPR(\theta_2)$ and $FPR(\theta_1) \leq FPR(\theta_2)$. The private $TPR$ and $FPR$ values must also satisfy this property to get a valid ROC curve. 



Our algorithm for computing differentially private ROC curves, called {\em PriROC} (Algorithm~\ref{alg:priroc}), addresses all the aforementioned concerns.  {\em PriROC} first privately chooses a set of thresholds (using privacy parameter $\epsilon_1$). By modeling $TP$ and $FP$ values as one-sided range queries, {\em PriROC} can compute noisy $TPRs$ and $FPRs$ values (using the remaining privacy budget $\epsilon_2$) with much lower error than using the Laplace mechanism. Finally, a postprocessing step enforces the monotonicity of  $TPRs$ and $FPRs.$ We next describe these steps in detail.
\begin{algorithm}[t]
\caption{$\emph{PriROC}$ $(T, P, \epsilon)$\label{alg:priroc}}
	1. Use $\epsilon_{1}$ budget to choose the set of thresholds for computing $TPRs$ and $FPRs$\\
	2. Use $\epsilon_{2} = \epsilon - \epsilon_{1}$ budget to compute the noisy $TPRs$ and $FPRs$ at all thresholds\\
	3. Postprocess the $TPRs$ and $FPRs$ sequences to maintain consistency.
\end{algorithm}

\subsubsection{Computing noisy TPRs \& FPRs }
Suppose we are given a set of thresholds $\Theta = \{\theta_{1}, \ldots, \theta_\ell\}$, where $\theta_{i} > \theta_{i+1}$ for all $i$. Assume that $\theta_0 =1$ and $\theta_{\ell}=0$. That is, for all records $t \in D_{test}$, the prediction $p(t)$ is greater than $\theta_\ell$, but not greater than $\theta_0$. Since, $TP(\theta)$ corresponds to the number of tuples $t$ with $t[L] = 1 \wedge p(t) \geq \theta$, $TP(\theta_\ell)$ is the total number of tuples with $t[L] = 1$ (denoted by $n_1$). Similarly,  $FP(\theta_\ell)$  is the total number of tuples with $t[L] = 0$ (denoted by $n_0$). Thus: 
\begin{eqnarray}
\nonumber
TPR(\theta_{i}) = \frac{TP(\theta_{i})}{n_1} = \frac{TP(\theta_{i})}{TP(\theta_{\ell})}  \ \ \forall 1 \leq i \leq \ell\\
\nonumber
FPR(\theta_{i}) = \frac{FP(\theta_{i})}{n_0} =  \frac{FP(\theta_{i})}{FP(\theta_{\ell})} \ \ \forall 1 \leq i \leq \ell
\end{eqnarray}
Therefore, an ROC curve can be constructed by just computing $TP(\theta_i)$ and $FP(\theta_i)$ for all $\theta_i \in \Theta$.  


We next observe that the true positive and false positive counts each correspond to a set of one-sided range queries.
\begin{definition}[One-sided Range Query]\ \\
Let $X = \{x_1, x_2, \ldots, x_n\}$ denotes a set of counts. A query $q_j$ is called a {\em one sided range query}, and $q_j(X)$ is the sum of the first $j$ elements in $X$. That is, 
$q_j(X) \ = \ \sum_{i = 1}^j x_i$.
The set $C_n = \{q_1, \ldots, q_n\}$ denotes the workload of all one sided range queries.
\end{definition}

In our context, let $X^{TP}_\Theta = \{x^{TP}_1, x^{TP}_2, \ldots, x^{TP}_\ell\}$, where $x^{TP}_i$ is the number of tuples $t \in D_{test}$ with $t[L] = 1$ and $\theta_{i-1} \geq p(t) > \theta_i$. It is easy to check that $TP(\theta_i)$ is the sum of the first $i$ counts in $X^{TP}_\Theta$. We can similarly define $X^{FP}_\Theta$, and show that each $FP(\theta_i)$ is also the answer to a one-sided range query $q_i$ on $X^{TP}_\Theta$.

It is well known that the Laplace mechanism is not optimal in terms of error for the workload of one-sided range queries $C_n$. Under Laplace mechanism, each query answer would have a mean square error of $O(n^2/\epsilon^2)$.  Instead, using strategies like the hierarchical mechanism \cite{boostacc_consistency} or Privelet \cite{Privelet} allow answering each one-sided range query with no more than $O(\log^3 n/\epsilon^2)$ error. In our experiments, we use the Privelet mechanism to compute the $TP$ and $FP$ counts with a privacy budget of $\epsilon_2/2$ for each. The Privelet algorithm first computes the wavelet coefficients of the counts in $X$, adds noise to the wavelet coefficients and then reconstructs a new $\hat{X}$ from the noisy wavelet coefficients. One-sided range queries are computed on $\hat{X}^{TP}_\Theta$ to get the $TP$ counts and on $\hat{X}^{FP}_\Theta$ to get the $FP$ counts, which in turn are used to construct the noisy $TPR(\theta)$ and $FPR(\theta)$ values. Since all steps subsequent to Privelet do not use the original data, the fact that releasing $TPR(\theta)$ and $FPR(\theta)$ satisfies $\epsilon_2$-differential privacy follows from the privacy of Privelet.

\subsubsection{Choosing Thresholds}
There are two important considerations when choosing the set of thresholds $\Theta$. The number of thresholds must not be very large, as the total error is directly related to $|\Theta|$. At the same time, the thresholds must be chosen carefully so that the ROC curve on those thresholds is a good approximation of the ROC curve drawn using all the predictions in the test data. We present two heuristics for choosing $\Theta$ that take into account the above considerations. 

A simple data-independent strategy for picking the set of thresholds is to choose them uniformly from $[0,1]$. More precisely, if $n$ is the cardinality of $D_{test}$, we choose the number of thresholds to be an $\alpha \in [0,1]$ fraction of $n$, and choose the set of thresholds to be $\Theta = \{0, \frac{1}{\lfloor \alpha n \rfloor} , \frac{2}{\lfloor \alpha n \rfloor} , \dots , \frac{\lfloor \alpha n \rfloor - 1}{\lfloor \alpha n \rfloor}, 1\}$. We call this strategy \alphafs. This strategy works well when the predictions $P = \{p(t) | t \in D_{test}\}$ are uniformly spread out in $[0,1]$. Since, \alphafs is data independent, $\epsilon_1=0$, and all the privacy budget can be used for computing the $TPR$ and $FPR$ values. 

\alphafs is not a good strategy in the general case. For instance, suppose a majority of the  predictions are less than the smallest threshold $\theta_1 = \frac{1}{\lfloor \alpha n \rfloor}$. Then the ROC curve  for all those points will be approximated with a single point $(TPR(\theta_1), FPR(\theta_1))$ possibly resulting in a significant loss in accuracy in the AUC. 

Hence, we present \alphamed, a data dependent strategy that addresses skewed prediction distributions by recursively partitioning the data domain such that each partition has roughly the same number of tuples (Algorithm~\ref{alg:priquantiles}). The algorithm takes as input $\epsilon$, the privacy budget for choosing thresholds, $k$, the number of recursive steps, and $P = \{p(t) | t \in D_{test}\}$, the multiset of predictions. As the name suggests the algorithm has $k$ recursive steps, and each uses a privacy budget of $\epsilon/k$.

The algorithm recursively calls a subroutine {\sc FindMedians} computing the noisy median of all predictions within the range $(left, right)$. Initally, $left = 0$ and $right = 1$. Since median has a high global sensitivity (equal to $right$ if all values are in the range $(left, right)$), we use the smooth sensitivity framework \cite{smoothsensitivity} for computing the noisy median. We refer the reader to the original paper for details on computing the smooth sensitivity for median. We choose to sample noise from distribution  $K /(1 + |z|^2)$ (where $K$ is a normalization constant). We can generate samples from the distribution by picking $U$ uniformly from $(0,1)$ and computing $tan(\pi(U-0.5))$ (since the CDF of the distribution is $\propto arctan(z)$).

The resulting noisy median $\tilde{m}$ could fall out of the range $(left, right)$. This could either happen due to random chance, or more likely because the smooth sensitivity of the points within the range is high. A high smooth sensitivity occurs either due to a small number of data points, or when about half the data points are very close to $left$, and the rest of the points are very close to right. Then a point in the middle of the range (e.g., $(left + right)/2$) is a good partition point, and is used instead of $\tilde{m}$. The algorithm proceeds to recursively find the medians of points in $(left, \tilde{m})$ and $(\tilde{m}, right)$. 
The algorithm returns after it completes k levels of recursion. The number of thresholds output by $\alphamed$ is $2^k$.

\begin{algorithm}[t]
\caption{\alphamed}
\begin{algorithmic}
\Function{\alphamed}{$P,\epsilon,k$}
	\State $\epsilon'	 \gets \frac{\epsilon}{k}$
	\State \Return {\sc FindMedians}$(P,\epsilon',k,0,1)$
\EndFunction
\\
\Function{FindMedians}{$P,\epsilon',k, left, right$}
	\If {$k=0$}
		\Return
	\EndIf
	\State $m \gets {median}(P)$
	\State $\tilde{m} \gets m + \frac{8S^{*}_{f_{med},\epsilon'}(P)}{\epsilon_{1}}*z$, $z$ is  random noise
 $\propto \frac{1}{1+z^{2}}$
 	\If{$\tilde{m} \leq left$ or $\tilde{m} \geq right$}
 		\State $\tilde{m} = (left + right)/2$
	\EndIf
 	\State $P_{1} \gets \{ P[i] \mid P[i] \textless \tilde{m} \}$
	\State $P_{2} \gets \{ P[i] \mid P[i] \textgreater \tilde{m} \}$
	\State \Return {\sc FindMedians}$(P_{1},\epsilon',k-1,left,\tilde{m}) \cup \, \tilde{m}\, \cup$ {\sc FindMedians}$(P_{2},\epsilon',k-1,\tilde{m},right)$
\EndFunction
\end{algorithmic}
\label{alg:priquantiles}
\end{algorithm}

\begin{theorem}\label{thm:dppriquantiles}
Algorithm \ref{alg:priquantiles} (\alphamed ) satisfies $\epsilon$-differential privacy.
\end{theorem}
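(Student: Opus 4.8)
The plan is to view \alphamed{} as $k$ adaptive rounds, one per level of recursion, each spending $\epsilon' = \epsilon/k$, and to combine them by composition. The first ingredient is that a single noisy-median computation inside {\sc FindMedians} is $\epsilon'$-differentially private. The median has unbounded local sensitivity, so this is exactly where the smooth sensitivity framework of \cite{smoothsensitivity} is needed: $S^*_{f_{med},\epsilon'}(P)$ is a smooth upper bound on the local sensitivity calibrated to $\epsilon'$, and releasing $\mathrm{median}(P) + \frac{8 S^*_{f_{med},\epsilon'}(P)}{\epsilon'}\, z$ with $z$ drawn $\propto 1/(1+z^2)$ is precisely the instantiation of that framework for heavy-tailed (Cauchy) noise that yields pure $\epsilon'$-differential privacy. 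The subsequent replacement of $\tilde m$ by the interval midpoint $(left+right)/2$ when $\tilde m$ falls outside $(left,right)$ is a deterministic function of $\tilde m$ and of the interval endpoints (which, for a fixed history, are data-independent), so by the postprocessing property it preserves $\epsilon'$-differential privacy.

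Next I would show that an entire level of the recursion is $\epsilon'$-differentially private. Fix the transcript $o_1,\dots,o_{j-1}$ of all noisy medians produced at levels $1$ through $j-1$. Conditioned on this transcript, the $2^{j-1}$ intervals defining the level-$j$ sub-calls are fixed real numbers, and sorting the predictions of $P$ into these cells (discarding any prediction that equals one of the earlier noisy medians along its path) is a deterministic partition of $P$ into disjoint multisets. Since neighbors in this paper differ by the insertion or deletion of one tuple, $P$ and $P'$ differ in exactly one prediction value, which lands in exactly one cell; hence at most one of the $2^{j-1}$ independent noisy-median computations sees a changed input, and it differs only by one added or removed point. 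Parallel composition then gives that releasing the whole vector of level-$j$ noisy medians is $\epsilon'$-differentially private, for every fixed earlier transcript.

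Finally, since the output of \alphamed{} is exactly the concatenation of the level-$1$ through level-$k$ noisy medians, and round $j$ is $\epsilon'$-differentially private given the outputs of rounds $1,\dots,j-1$, adaptive sequential composition gives that the whole release is $k\epsilon' = \epsilon$-differentially private.

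I expect the delicate step to be the second one: justifying that a data-dependent recursive partition is still compatible with parallel composition. The points to nail down are (i) that conditioning on the earlier noisy medians freezes the cell boundaries, so the partition becomes a deterministic function of $P$; (ii) that it is the add/remove neighbor relation (rather than replace-one-tuple) that forces only a single cell to change; and (iii) the corner cases — a differing prediction that coincides with an earlier noisy median is dropped identically in both runs below that level, and an out-of-range noisy median is replaced by a fixed midpoint — which must be checked not to break either the per-cell privacy bound or the single-cell-change property.
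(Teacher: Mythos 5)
Your proposal follows the paper's own decomposition exactly: the smooth-sensitivity mechanism with noise $\propto 1/(1+z^{2})$ makes each noisy-median release $\epsilon/k$-differentially private, parallel composition across the disjoint cells of one recursion level keeps that level at $\epsilon/k$, and sequential composition across the $k$ levels gives $\epsilon$. You go somewhat further than the paper's sketch by explicitly conditioning on the transcript of earlier noisy medians to freeze the cell boundaries before invoking parallel composition — which is the right way to justify the adaptive partitioning that the paper's sketch uses without comment — and by noting that replacing an out-of-range $\tilde m$ with the interval midpoint is postprocessing; these are refinements of, not departures from, the paper's argument.
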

\begin{proof}(sketch)
The proof follows from the following statements: 
Computing the median of a set of points in each invocation of {\sc FindMedians} satisfies $\epsilon/k$-differential privacy. This is true as long as noise is drawn from the distribution $\propto 1/(1+|z|^\gamma)$, scaled appropriately by the smooth sensitivity and $\gamma \geq 1$. 
In each recursive step, computing the medians in disjoint partitions of the data satisfies $\epsilon/k$-differential privacy by parallel composition. 
Since the number of recursions is bounded by $k$, \alphamed satisfies $\epsilon$-differential privacy by serial composition.
\end{proof}

\eat{Before we give the proof of theorem \ref{thm:dppriquantiles}, we first give the following definition from \cite{smoothsensitivity}.

\begin{definition}[$(\alpha,\beta)$-admissible]
A probability distribution $h$ is $(\alpha,\beta)$-admissible if, for $\alpha=\alpha(\epsilon,\delta)$, $\beta=\beta(\epsilon,\delta)$, the following two inequations satisfy for all $||\Delta|| \leq \alpha$, $|\lambda | \leq \beta$ and any subset $S$ from domain:
\begin{eqnarray}
\nonumber
\underset{Z \sim h}{Pr}[Z \in S] \leq e^{\frac{\epsilon}{2}} \underset{Z \sim h}{Pr}[Z \in S + \Delta] + \frac{\delta}{2}\\
\underset{Z \sim h}{Pr}[Z \in S] \leq e^{\frac{\epsilon}{2}} \underset{Z \sim h}{Pr}[Z \in e^{\lambda}S] + \frac{\delta}{2}
\end{eqnarray}
\end{definition}

Then we have another thereom.

\begin{theorem}\label{thm:smoothsensitivity}
If a mechanism $A$ computes a function $f: {\cal D} \rightarrow \mathbb{R}^m$ by computing $A(x) = f(x) + \frac{S(x)}{\alpha}Z$, $Z$ is a random variable sampled from an $(\alpha,\beta)$-admissible probability density function and $S(x)$ is the $\beta$-smoothsensitivity of $f$. Then A is $(\epsilon,\delta)$-differential privacy.
\end{theorem}

The proof of theorem \ref{thm:smoothsensitivity} can be found in \cite{smoothsensitivity}.\\

In \cite{smoothsensitivity}, it shows $h(z) \propto 1/(1+|z|^{\gamma})$ for $\gamma > 1$ are $(\epsilon/4\gamma,\epsilon/\gamma)$-admissible, and yield $\delta = 0$. Now, we can prove the thereom \ref{thm:dppriquantiles}.

When running function $\emph{FindingMedians}$, we get the noisy median point $\tilde{P_{median}}$. Based on the thereom \ref{thm:smoothsensitivity}, getting $\tilde{P_{median}}$ satisfies $\epsilon_{1}$-differential privacy. In each level of running function $\emph{FindingQuantiles}$, we compute the noisy medians on disjointed sequences. According to the parallel property of differential privacy, the function $\emph{FindingQuantiles}$ is $\epsilon_{1}$-differential privacy. The parameter $level$ limits the times of recursions. Based on the sequential property of differential privacy, our Algorithm \alphamed satisfies $\epsilon$-differential privacy.

One supplement of Algorithm \ref{alg:priquantiles} is the way to sample noise from a density distribution $h(z) \propto 1/(1+|z|^{2})$. Acctually, it is equivalent to sample noise from $tan(\pi(U-0.5))$, $U$ is a uniform distribution random variable between $(0,1)$.
}

\subsubsection{Ensuring monotonicity}
$TPR(\theta)$ and $FPR(\theta)$ values in the original ROC curve are monotonic. That is, the true positive rates satisfy the following constraint: $0 \leq TPR(\theta_1) \leq \ldots \leq TPR(\theta_\ell) = 1$. However, this may not be true of the noisy $TPR$ and $FPR$ values (generated using the strategy from the previous section). We leverage the ordering constraint between the $TPR$ and $FPR$ values to boost the accuracy by using the constrained inference method proposed by Hay et al \cite{boostacc_consistency}. Since this is a postprocessing step, there is no impact on privacy.

The error introduced by our algorithms for generating ROC curves varies with different datasets. Therefore, we empirically evaluate the utility of our algorithms on real data in the next section.

 \eat{ 
In the last step of this algorithm, we will postprocess the $TPR(\theta_{i})$ and $FPR(\theta_{i})$ sequence in order to maintain their consistency. There are two aspects of consistency we have to follow. 

The first one is that we should make sure all $TPR(\theta_{i})$ and $FPR(\theta_{i})$ be within the range $[0,1]$. What we can do is that if any $TPR(\theta_{i})$ or $FPR(\theta_{i})$ is below $0$ or above $1$, we replace it with $0$ or $1$.

The second apsect is the monotonicity of the $TPRs$ and $FPRs$, which is with the decreasement of chosen thresholds $\theta_{i}$, the $TPR(\theta_{i})$ and $FPR(\theta_{i})$ should be nondecreasing. To achieve this requirement, we determine to use the method \cite{boostacc_consistency} raised by Michael Hay, which boosts the accuracy through maintaining this kind of consistency.

\subsection{Upper Bound and Lower Bound}
In this part, we will theoretically study the upper bound and lower bound of our algorithm for computing differentially private ROC curve. Since the postprocessing of maintaining monotonicity of the $TPRs$ and $FPRs$ in our algorithm may complicate the computation of the both bounds a lot, all the following theoretical analysis is based on the method without doing this postprocessing.

Suppose there are totally $n$ different $p_{i}$ and $\Theta = \{\theta_{i}\}_{0}^{n}$ are all possible thresholds, $\theta_{0} = 1$, $\theta_{i} > \theta_{i+1}$ for all $i = 1, \dots, n-1$, $\theta_{n} = 0$. We use $Lap(\lambda, \delta)$ to denote the bound that with probability $\delta$, the absolute value of one random Laplace noise with parameter $\lambda$ should have. Also we use $|P|$ to denote the number of data with label $1$ and $|N|$ to denote the number of data with label $0$.

\begin{theorem}\label{thm:bounds}
Given a binary classifier $C$ , n different chosen thresholds and fixed parameters $\epsilon$, $\delta$, the bias introduced by these thresholds is $bias$. Then we define the following expressions 
\begin{eqnarray*}
X = \frac{2*Lap(\frac{log(n)}{\epsilon},\delta)}{|P|} \\
Y = \frac{2*Lap(\frac{log(n)}{\epsilon},\delta)}{|N|}
\end{eqnarray*}

Then we have following two results:

(1) If either $X$ or $Y$ is out of the range $(0,1)$, the upper bound and the lower bound of the output ROC curve would be the lines which lead to the area $1$ and $0$.

(2) If both $X$ and $Y$ are within the range$(0,1)$, then the area of upper bound $UA$, the area of the lower bound $LA$ and the area of the real ROC curve $A$ will have the following relationship

\begin{eqnarray}
\nonumber
|UA - A| \leq X + Y - X*Y + bias\\
\nonumber
|A - LA| \leq X + Y - X*Y + bias
\end{eqnarray}

\end{theorem}

The full proof of thereom \ref{thm:bounds} is in Appendix C.\\

Two supplements for this upper bound and lower bound of our differential private ROC curve.

First, both bounds are computed without considering the postprocessing of maintaining monotonicity. In fact, the operation for maintaining consistency may largely improve the accuracy of the result. Therefore, the error of those bounds will be much bigger than the real output especially under a small privacy budget or a small number of data with each label.

Second, when we theoretically compute the bound of added error for all one-sided range queries on $t_{i}$ sequence, we directly consider their sensitivity of $log(n)$. Accually, since we use Privelet method to implement the perturbation process, we add noise to all the coefficients instead. The error introduced to each one-sided range query is a combination of several random noise. It is with much less probability or even impossible to make all $TPRs$ and $FPRs$ reach the noise bound at the same time. Thus, the upper bound and lower bound we provide is much looser than the real situation.
 }

\section{Experiments}\label{sec:experiments}
In this section we experimentally evaluate our differentially private algorithms for feature selection (Section~\ref{sec:fsexp})) and generating ROC curves (Section~\ref{sec:rocexp}). The main takeaways from the experimental evaluation on differentially private feature selection are: 
\squishlist
\item Spending a part of the privacy budget for private feature selection can significantly improve the misclassification rate (10\% - 15\%) of a differentially private classifier. This is despite a noisier classifier due to the smaller privacy budget. 
\item Feature selection using private threshold testing consistently results in classifiers with higher accuracy than feature selection using score perturbation and cluster selection PTT also significantly outperforms a related technique {\sc noisycut} in solving the {\sc ScoreBasedFS} problem. 
\item In the differential privacy regime, simple scoring techniques (like total count $TC$) perform as well or even better than measures like information gain $IG$ that are considered best in the non-private regime. 
\squishend
The main takeaways from the experimental evaluation on differentially private ROC curves are: 
\squishlist
\item The area under the curve (AUC) measure for the differentially private ROC curves are close to the AUC measures for the true ROC curves. Therefore, with high probability differentially private ROC curves can be used to distinguish between classifiers that are significantly different. 
\item The AUC error for ROC curves generated by PriROC is significantly smaller than AUC error for ROC curves based on true and false positive rates computed using the Laplace mechanism.
\item The \alphamed method to pick thresholds results in better ROC curves than using \alphafs. 
\item The number of thresholds chosen to generate the differentially private ROC curve does not significantly affect the AUC error.
\squishend

\eat{\subsection{The Data}

\noindent \textbf{Sentiment140 Twitter}
The Twitter data
set comes from Go et al \cite{go2009twitter} and contains a large collection
of tweets that are classified as both positive and negative in tone. 
We sub-sampled this collection giving each tweet an equal opportunity to 
be in our testing corpus. The final result was approximately 7000 tweets
each with its associated text and positive/negative tag. Since tweets are
limited in length the sensitivity of a count query on this data set is 21,
as each tweet contributes to one label count and 20 word counts.
\\

\noindent \textbf{SMS Spam Collection}
The SMS data set was collected by Almeida et al \cite{sms} and contains 
approximately 5500 sms messages that are tagged as either 
ham or spam. What is important to note with this corpus is that
while the twitter data set is almost a 50-50 split between 
positive and negative, this corpus is roughly split 70-30 between
ham and spam. The result is that the overall accuracy is higher
in experiments with this data set, but that lower percentages are
much worse as guessing all ham would result in at worst a 70\%
success rate. Since SMS messages are
limited in length as tweets, we have ascribed the same sensitivity 
to this data set.
\\

\noindent \textbf{Reuters-21578, Distribution 1.0}
The Reuters-21578, Distribution 1.0 test collection is available
from David D. Lewis' professional home page \cite{reuters}, currently:
http://www.research.att.com/$\sim$lewis. This notification is requested
of those who make use of this data set. The total data set consists of 
21578 articles that are tagged with their respective categories. 
To fit the framework of our research we have sub-sampled this data
set to produce a corpus of 6840 articles labeled as
either earnings-related or not (based upon the ``earn'' topic keyword). This occurs at a ratio of
approximately 1:1 within our sampled data set. Since an article
does not have a set character limit (or has one that is substantially higher
than our other corpuses) we are unable to set a limit for the
sensitivity of this data set.

\subsection{The Classifiers}
\noindent \textbf{Naive Bayes}
Naive Bayes classifiers operate by collecting counts $n$, $n_{L}$, and
$n_{F \wedge L}$ from the training data set. These counts can be
used to produce $P_D(L)$ and $P_D(L \wedge F)$.  Vaidya et al. \cite{Vaidya2013} 
show how a Naive Bayes classifier can be made private via perturbation
of these counts with the Laplace mechanism.
The sensitivity used for this perturbation is dependent on 
training set sensitivity.
\\

\noindent \textbf{Logistic Regression}
Logistic Regression models try to divide a space into sub-spaces corresponding to labels
based on the training data set. New data instances will correspond to one of the 
labeled sub-spaces, allowing it to be classified. For basic logistic regression we have
used the prepackaged Scikit-learn logistic regression classifier \cite{scikit-learn}.

To make Logistic Regression private  you have the choice of perturbing the 
classifier's output or some internal state during classifier training 
(which proves to be more effective). The current state of the art is the Empirical Risk
Minimization model outlined by Chaudhuri et al \cite{Sarwate}. ERM
perturbs the internal objective function used for regression to achieve
guarantees of differential privacy.}

%
\subsection{Feature Selection}\label{sec:fsexp}
\newcommand{\twitter}{{\sc Twitter}\xspace}
\newcommand{\sms}{{\sc SMS}\xspace}
\newcommand{\reuters}{{\sc Reuters}\xspace}
\subsubsection{Setup}
We use three text classification datasets - \twitter, \sms and \reuters. The \twitter dataset \cite{go2009twitter} was collected for the task of sentiment classification. Each tweet is associated with a binary sentiment label -- positive or negative. The datast contains 1.6 million tweets from which we randomly sampled  7304 tweets for our experiments. We constructed binary features for every word (excluding stop words) resulting a total of 32935 features. Since each tweet contained at most 20 non-stop words, we set $s = 20$.  
The \sms dataset \cite{sms} contains  5574 SMS messages associated with spam/ham label. The dataset has a total of 8021 features. Since SMS messages are short, we again set $s = 20$.  The \reuters dataset consists of 21578 news articles tagged with topics. To get a training dataset with a binary class label, we chose a corpus of 6906 articles labeled as earnings-related or not (based on the "earn" topic keyword). Since an article does not have a word limit, we do not have a small bound on $s$ like in \twitter or \sms. The total number of features is 33389.

We choose to evaluate our feature selection algorithm on two state of the art differentially private classifiers -- Naive Bayes \cite{Vaidya2013}, and the differentially private ERM implementation of logistic regression \cite{Sarwate}. The Naive Bayes (NB) classifier assumes that the features are conditionally independent given the label $L$. Given a feature vector $x \in \{0,1\}^{|{\cal F}|}$, the predicting label given by 
\[argmax_{\ell \in \{0,1\}} Pr[L = \ell] \cdot \prod_{F \in {\cal F}} Pr[F = x[F] | L = \ell]\]
Thus, the Naive Bayes classifier can be made private by releasing differentially private counts of
$n$, $n_{L = \ell}$, and $n_{F = i \wedge L = \ell}$ from the training data set, for $i, \ell \in \{0,1\}$.

Logistic regression models the log odds of the prediction as linear function of the features. Empirical risk minimization is used to fit the linear model given a dataset. For non-private logistic regression, we have
used the prepackaged Scikit-learn logistic regression classifier \cite{scikit-learn}. We use an implementation of Chaudhuri et al's \cite{Sarwate} differentially private empirical risk minimization (henceforth called ERM) for logistic regression.

The accuracy of a classifier is measured using the fraction of predictions that match the true label on a held out test set. The results are average over 10 runs (using 10-fold cross validation) to account for the noise introduced due to differential privacy.

\subsubsection{Feature Selection Results}

\begin{figure*}[t]
  \subfigure[Twitter Non-private]{\label{fig:twit_np}\includegraphics[width=0.31\textwidth]{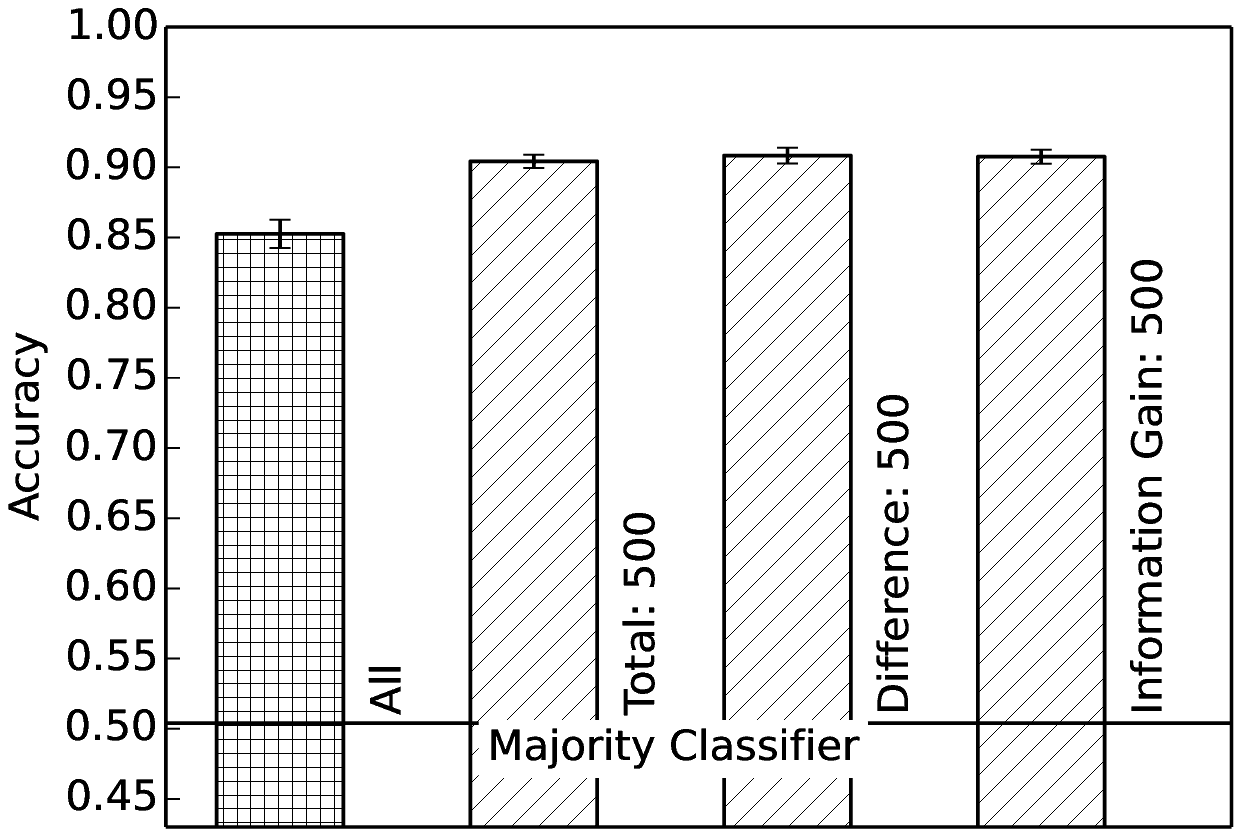}}
  \subfigure[Twitter Private]{\label{fig:twit_p}\includegraphics[width=0.69\textwidth]{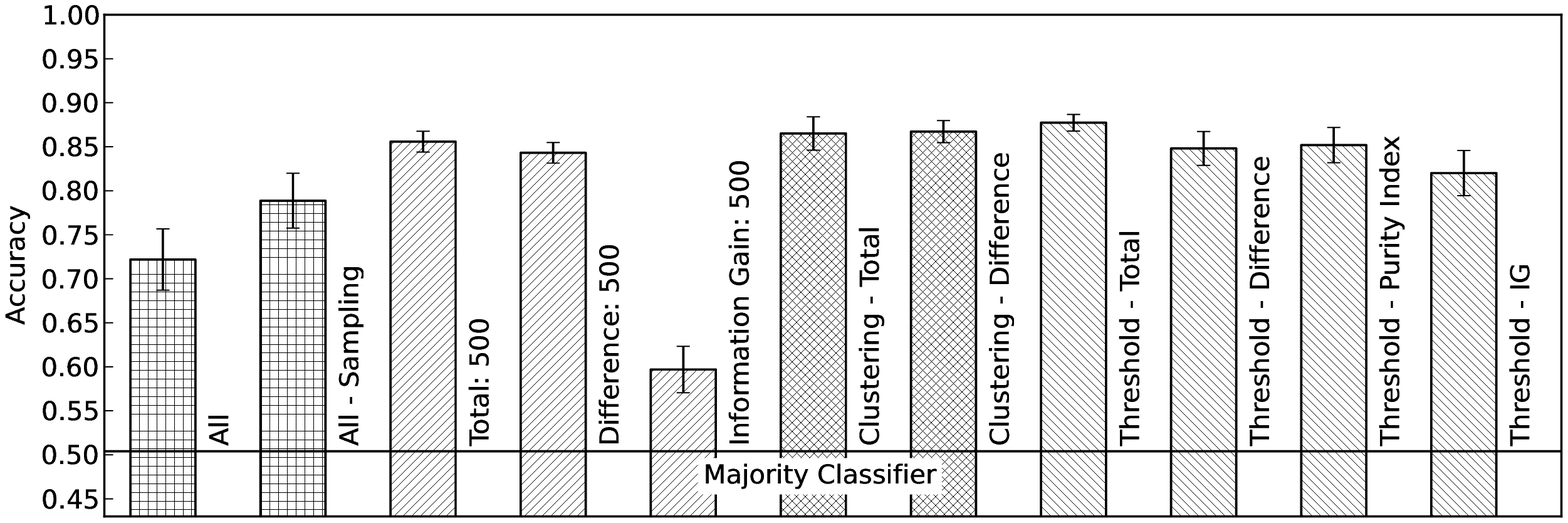}}
  \subfigure[SMS Non-private]{\label{fig:sms_np}\includegraphics[width=0.31\textwidth]{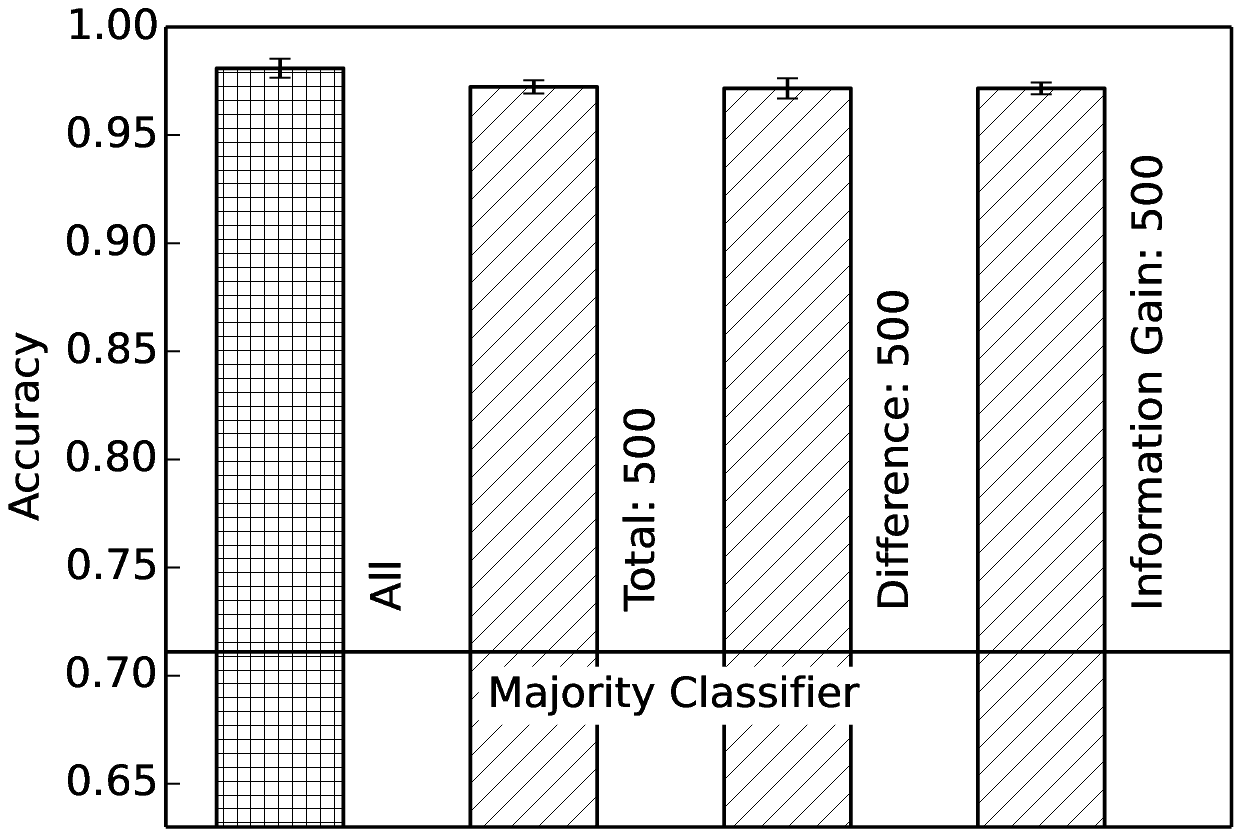}}
  \subfigure[SMS Private]{\label{fig:sms_p}\includegraphics[width=0.69\textwidth]{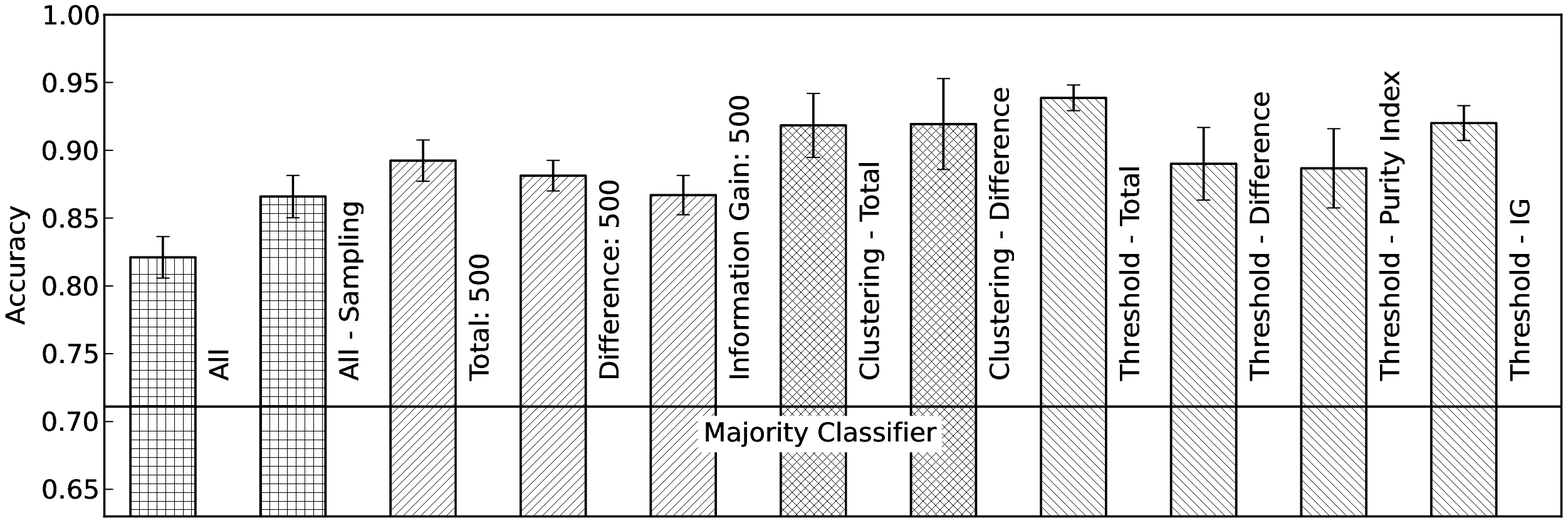}}
  \subfigure[Reuters Non-private]{\label{fig:reut_np}\includegraphics[width=0.31\textwidth]{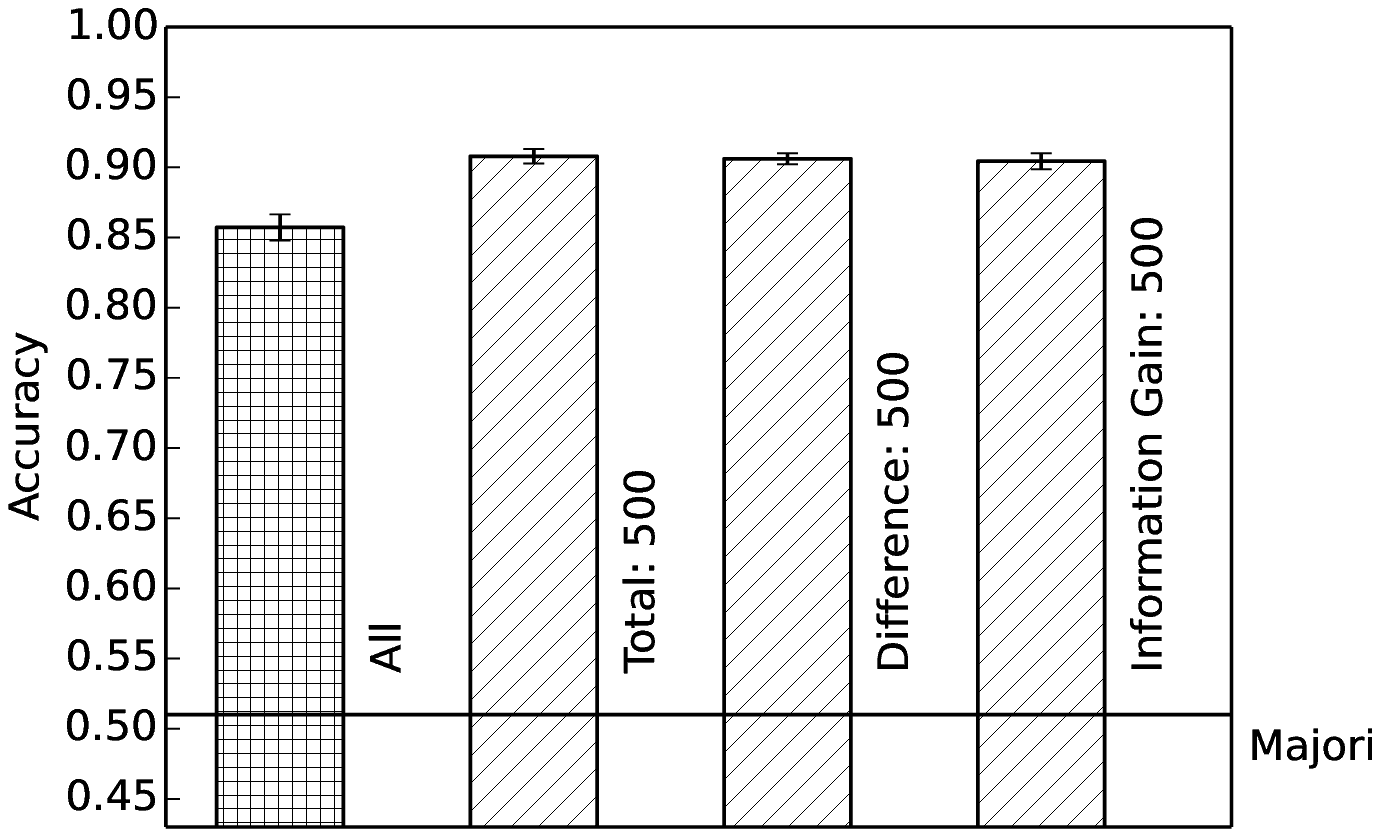}}
  \subfigure[Reuters Private]{\label{fig:reut_p}\includegraphics[width=0.69\textwidth]{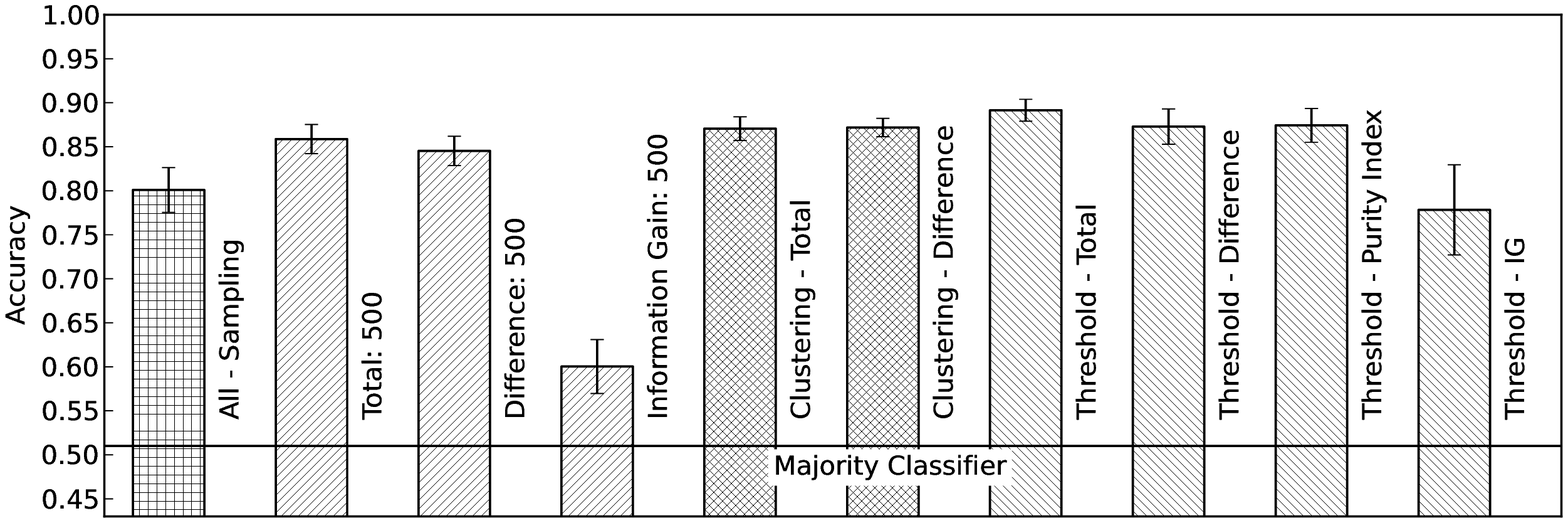}}
  \caption{\label{fig:all-wide}Naive Bayes classification with the \twitter, \sms and \reuters datasets.}
\end{figure*}

\begin{figure*}[t]
 %
  \subfigure[SMS Non-private]{\label{fig:sms_erm_np}\includegraphics[width=0.31\textwidth]{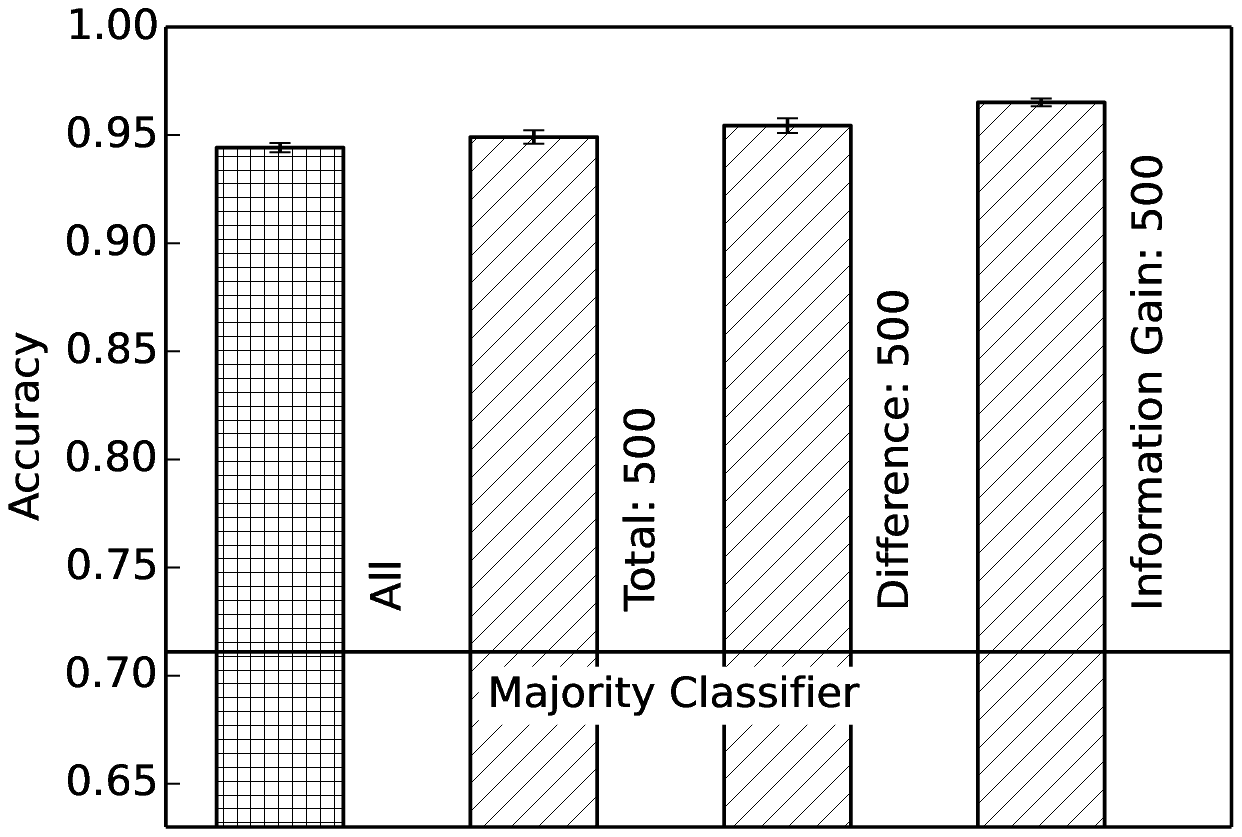}}
  \subfigure[SMS Private]{\label{fig:sms_erm_p}\includegraphics[width=0.69\textwidth]{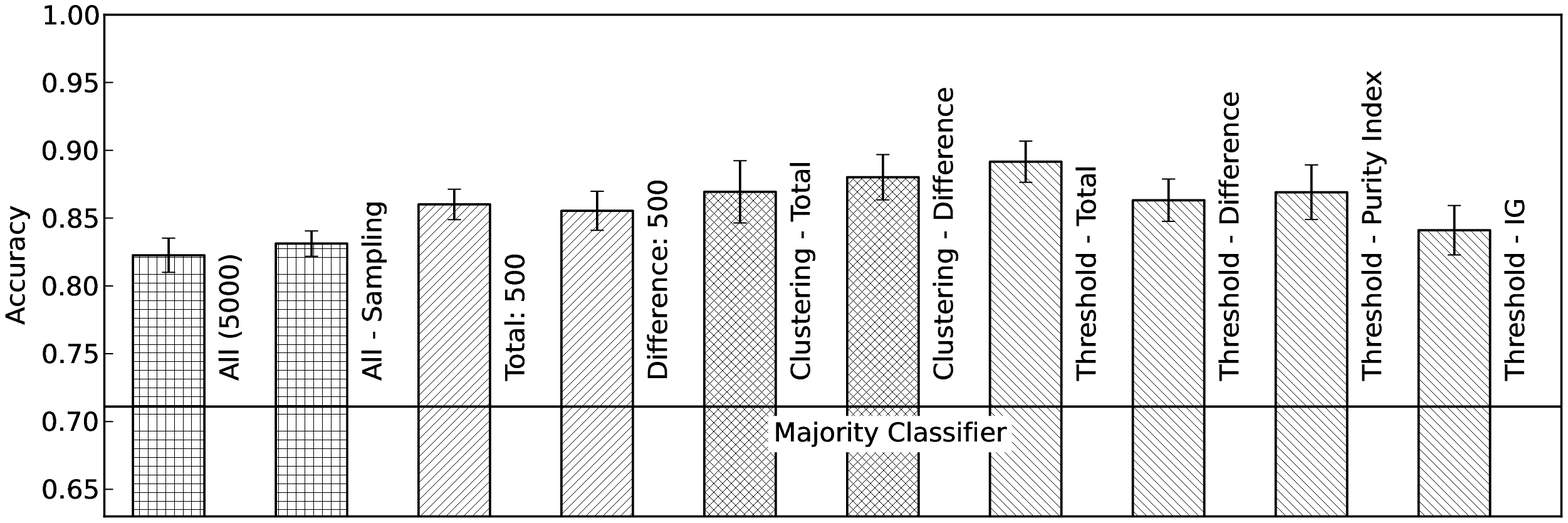}}
 %
 %
  \caption{\label{fig:logreg_wide}Logisitic regression with the \sms dataset. ERM used for private regression.}
\end{figure*}

Figure \ref{fig:all-wide} presents a comparison of all the discussed feature selection methods across all
three data sets using a non-private and a private naive bayes classifier. In the non-private case (Figures~\ref{fig:twit_np},\ref{fig:sms_np}, and \ref{fig:reut_np}), we see a small improvement in the accuracy using all three scoring techniques $TC$, $DC$ and $IG$. $PI$ resulted in a similar accuracy as $DC$ and is not shown.  $IG$ has the highest accuracy for all the datasets.

In the private case (Figures~\ref{fig:twit_p},\ref{fig:sms_p}, and \ref{fig:reut_p}), `All' corresponds to no feature selection, and `All-sampling' correponds to using all the features but with sampling (to reduce the sensitivity) with $r=10$. For the private graphs, the total $\epsilon$-budget is 1.0. We see that even though sampling throws away valuable data, we already see an increase in the accuracy. This is because sampling also helps reduce the sensitivity of the classifier training algorithm. Note that we do not report the `All' bar for \reuters -- since we can't bound the lenght of an article, the sufficient statistics for the naive bayes classifier have a very high sensitivity.  We also show the accuracy of the majority classifier, which always predicts the majority class.

Next we add feature selection. Both score perturbation and clustering are used in conjunction with sampling (to reduce sensitivity). Private threshold testing (PTT) does not use sampling. For score perturbation the budget split is .5 for selection and .5 for classification (budget split is discussed in Section~\ref{sec:other}). For clustering and PTT the budget split is .2 for selection and .8 for classification. 

We see that most of the feature selection techniques (and scoring functions) result in a higher accuracy than `All-sampling'. One exception is $IG$ due to its high sensitivity.  Additionally as noted in section \ref{sec:score_pert}, experiments with score perturbation 
of Information Gain  were run under bounded differential privacy (since the sensitivity of $IG$ is higher under unbounded differential privacy). We see poor accuracy with IG and score perturbation despite this. We do not report $IG$ under clustering and $PI$ under score perturbation and clustering due to their high sensitivity. We are surprised to see that $TC$ is as good as or better than ``best'' non-private scoring techniques across all three datasets and all differentially private feature selection techniques. This is due to its low sensitivity. We also note a trend that PTT with $TC$ is more accurate than clustering with $TC$ which is in turn more accurate than score perturbation with $TC$. 

Figure \ref{fig:logreg_wide} contains the same tests, but with the ERM classifier. We only show results on the \sms dataset due to space constraints.  We found that the private ERM code does not scale well to large number of features. 
For that reason we first selected the top 5000 features according to $TC$ scoring function and used that in place of the `All' features. Feature selection was then performed on this restricted dataset. We see the same trends as in the case of the Naive Bayes classifier.
The results are comparable to those run on the private Naive Bayes classifier, but with a lower
accuracy overall. This lower accuracy could be because Naive Bayes is known to outperform other methods for the  text classification task.

\begin{figure*}[t]
  \centering     
  \subfigure[PTT]{\label{fig:f1ptt}\includegraphics[width=0.24\textwidth]{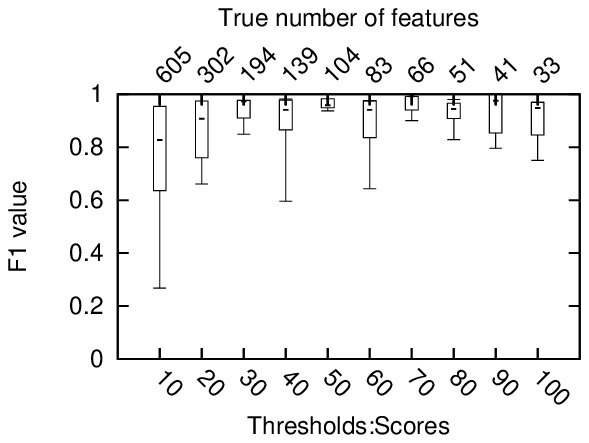}}
  \subfigure[NOISYCUT]{\label{fig:f1nc}\includegraphics[width=0.24\textwidth]{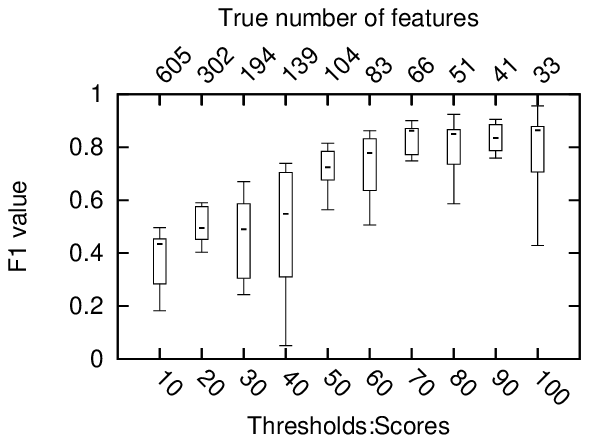}}
  \subfigure[Score Perturbation]{\label{fig:f1sp}\includegraphics[width=0.24\textwidth]{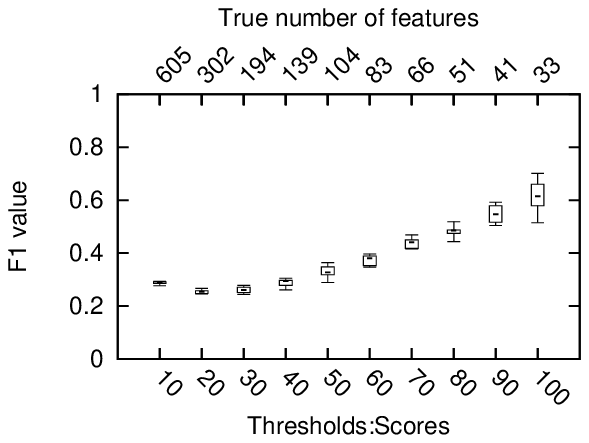}}
  \subfigure[Cluster]{\label{fig:f1cluster}\includegraphics[width=0.24\textwidth]{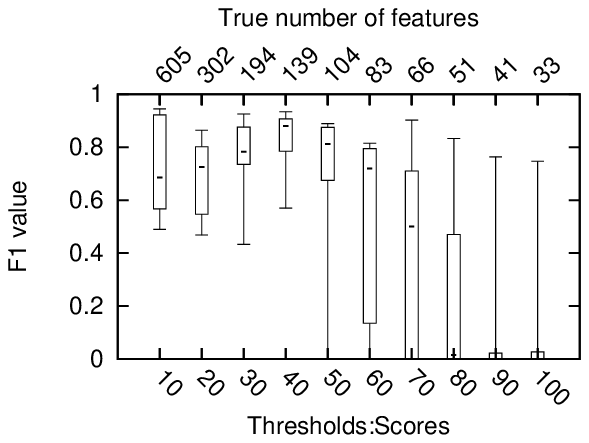}}
  \caption{F1 Score Comparison among 4 Private Feature Selection Algorithms on SMS data}
  \label{fig:scorebasedFS}
\end{figure*}

\subsubsection{{\sc ScoreBasedFS} Comparison}
We also evaluate the quality of the just the feature selection algorithms (without considering a classifier). The accuracy of a feature selection technique is quantified as follows. Let ${\cal F}_\tau$ be the true set of features whose scores are greater than the threshold (under some fixed scoring function), and let ${\cal F}'$ be the set of features returned by a differentially private algorithm for {\sc ScoreBasedFS}. We define precision (pre), recall (rec) and F1-score (F1) as follows: 
\[pre = \frac{|{\cal F}_\tau \cap {\cal F}'|}{|{\cal F}'|}, \ rec =\frac{|{\cal F}_\tau \cap {\cal F}'|}{|{\cal F}_\tau|}, \ F1 = \frac{2\cdot pre\cdot rec}{pre + rec}\]
Figure \ref{fig:scorebasedFS} shows the F1 scores for 4 private feature selection methods using $TC$  -- score perturbation, clustering, PTT and {\sc noisycut} \cite{lee14:kdd}. The x-axis corresponds to different thresholds $\tau$. The x-axis values on the top represent $|{\cal F}_\tau|$. 

There are two notable features of these plots. First, PTT does the best of all selection methods at all thresholds. This is due to the fact that only the threshold is perturbed. Since the ordering of feature scores is maintained, ${\cal F}'$ is a superset of ${\cal F}_\tau$ (with $rec = 1$) or is a subset of ${\cal F}_\tau$ (with $pre = 1$). In particular it significantly outperforms {\sc noisycut} under small thresholds (or when many features must be chosen). Second, we are able to see what
settings would cause the other methods to struggle. Both score perturbation and {\sc noisycut} have poorer accuracy as $\tau$ decreases (or number of features increases). This is because feature score are perturbed, and as we increase the number of features to be selected there is a larger chance that good features are eliminated and poorer features are returned just by random chance. Clustering shows the reverse trend. This is because low scoring features tend to cluster together resulting in large clusters (resulting in low sensitivity).  The same is not true for high scoring features.

\begin{figure*}[t]
  \centering     
  \subfigure[Twitter Budget]{\label{fig:twit_budget}\includegraphics[width=0.33\textwidth]{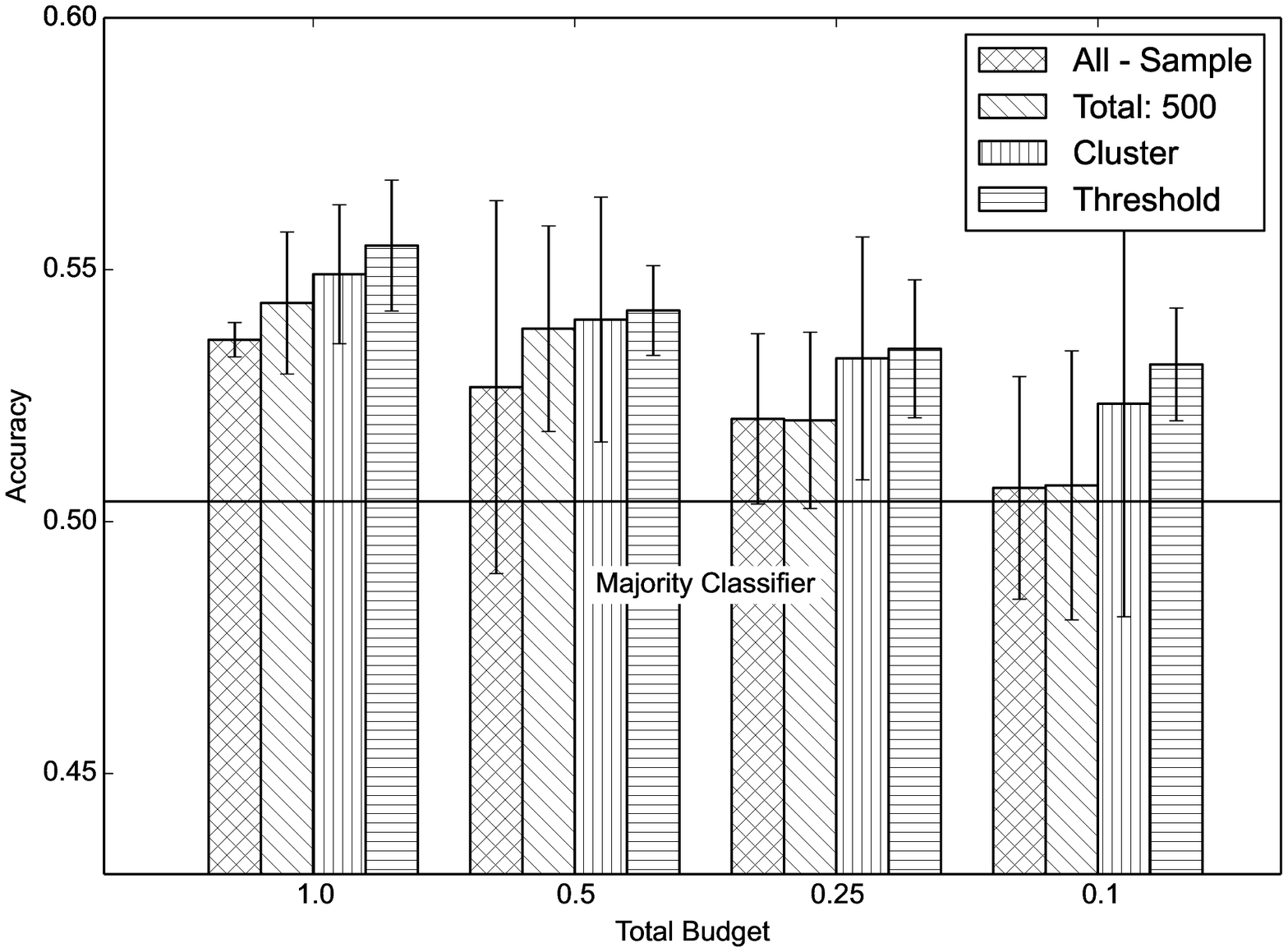}}
  \subfigure[SMS Budget]{\label{fig:sms_budget}\includegraphics[width=0.33\textwidth]{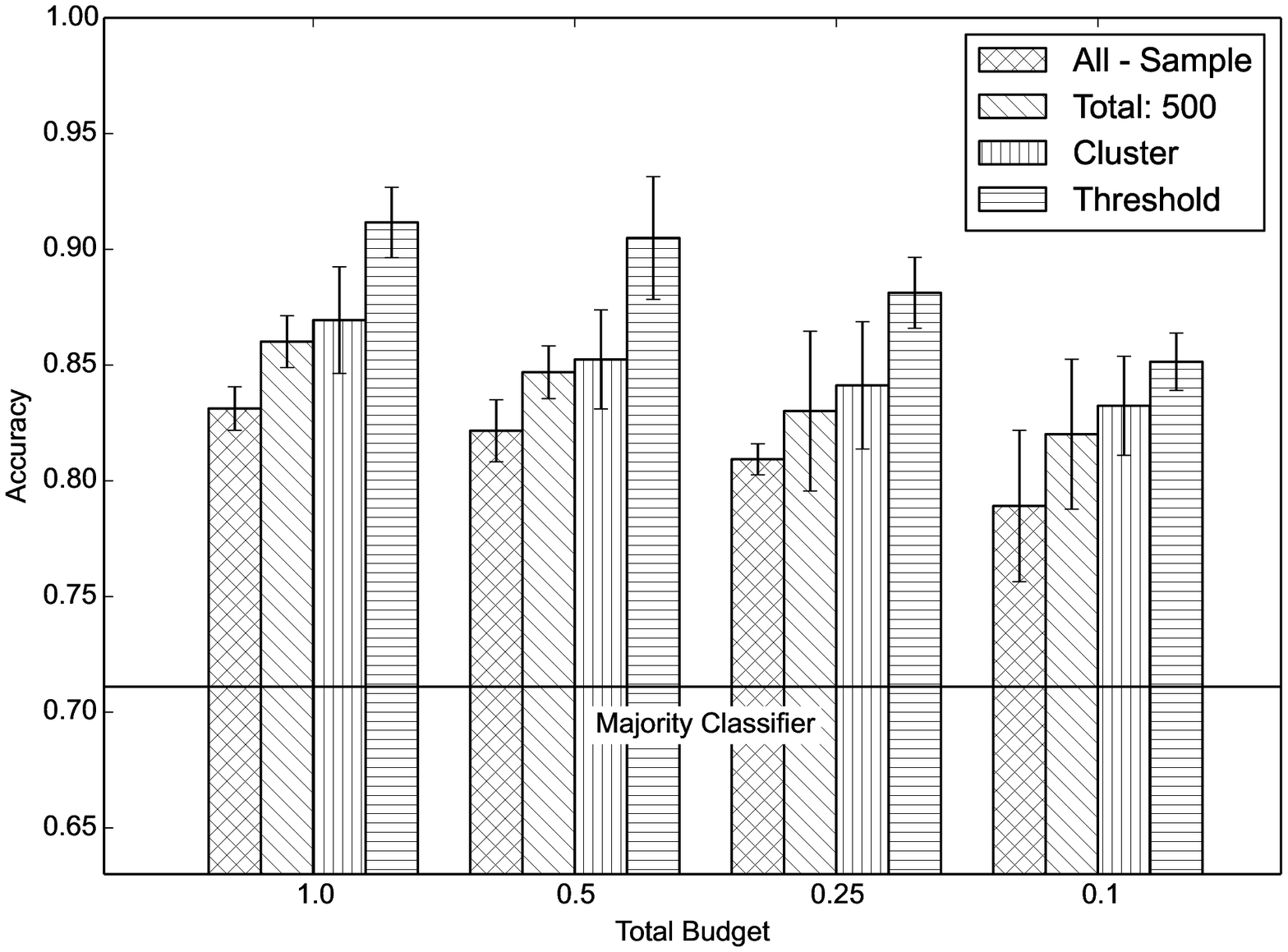}}
  \subfigure[Reuters Budget]{\label{fig:reut_budget}\includegraphics[width=0.33\textwidth]{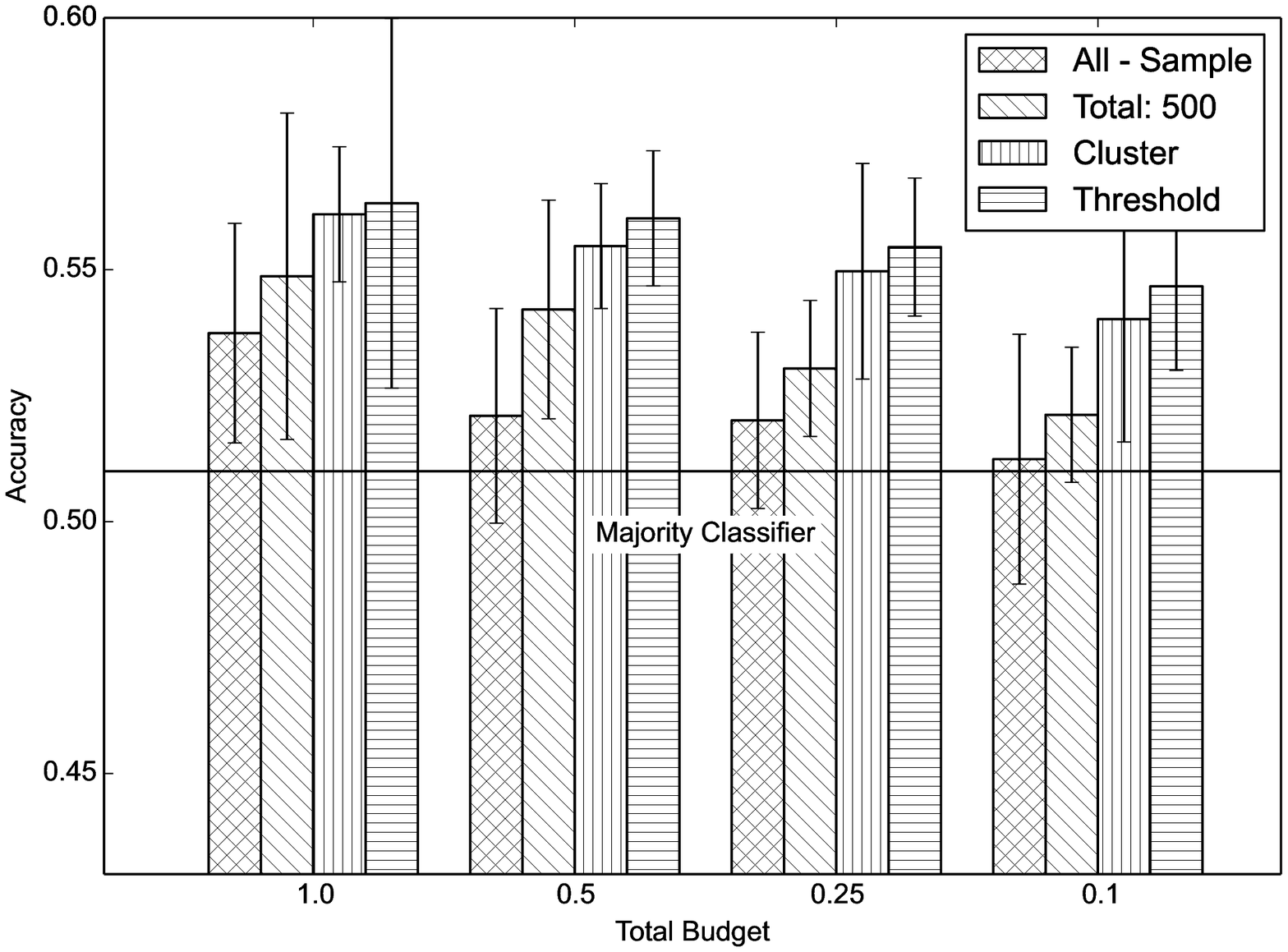}}
  \caption{Accuracy of the system versus total budget selection.}
  \label{fig:total-budget}
\end{figure*}

\begin{figure*}[t]
  \centering     
  \subfigure[Twitter Budget]{\label{fig:twit_split_budget}\includegraphics[width=0.24\textwidth]{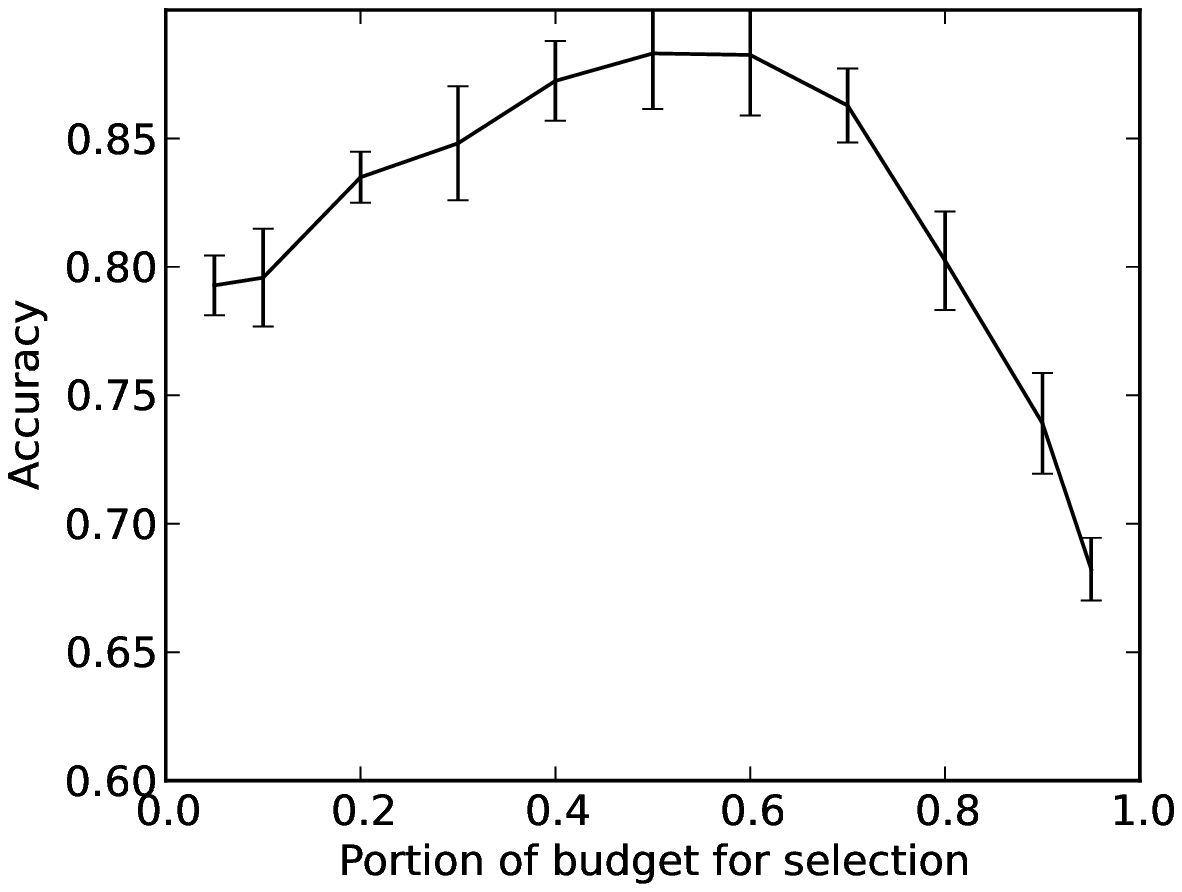}}
  \subfigure[SMS Budget]{\label{fig:sms_split_budget}\includegraphics[width=0.24\textwidth]{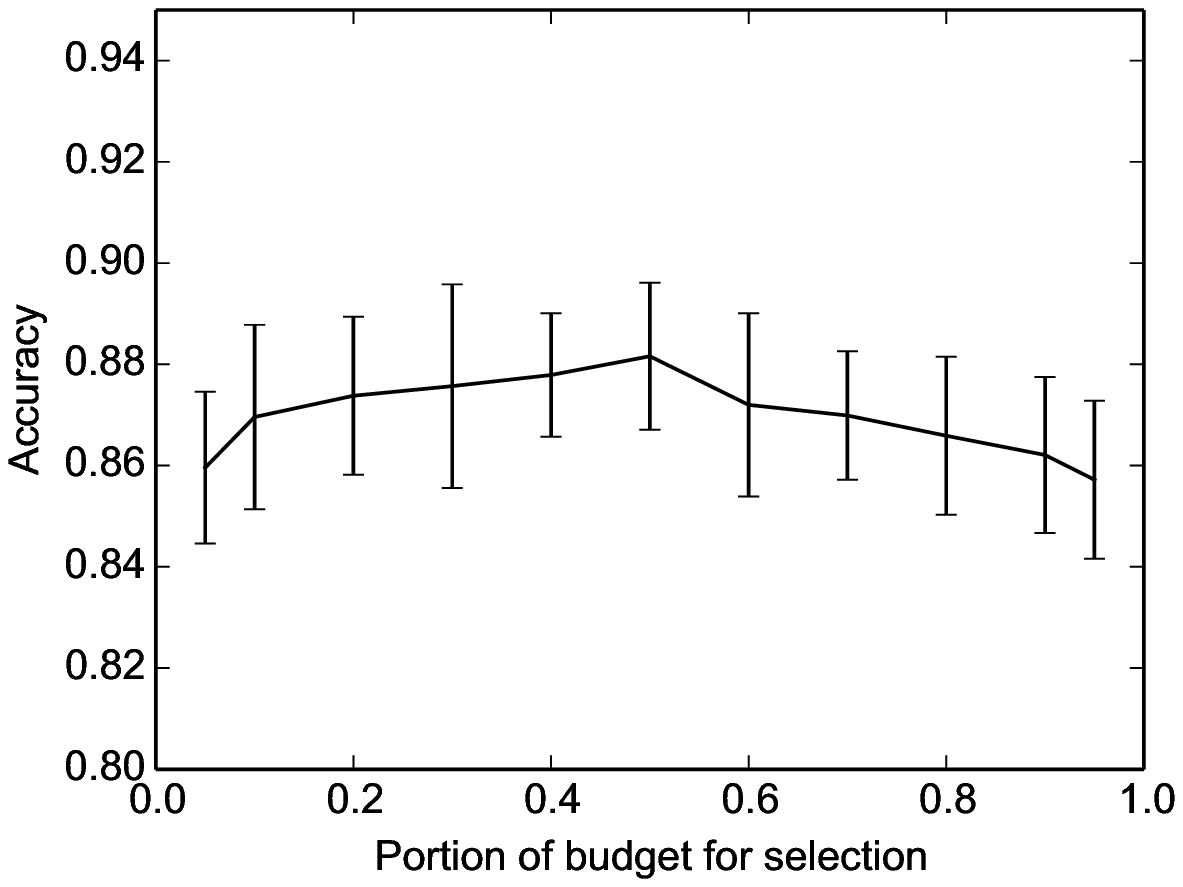}}
  \subfigure[Twitter Sampling Rate]{\label{fig:twit_samp}\includegraphics[width=0.24\textwidth]{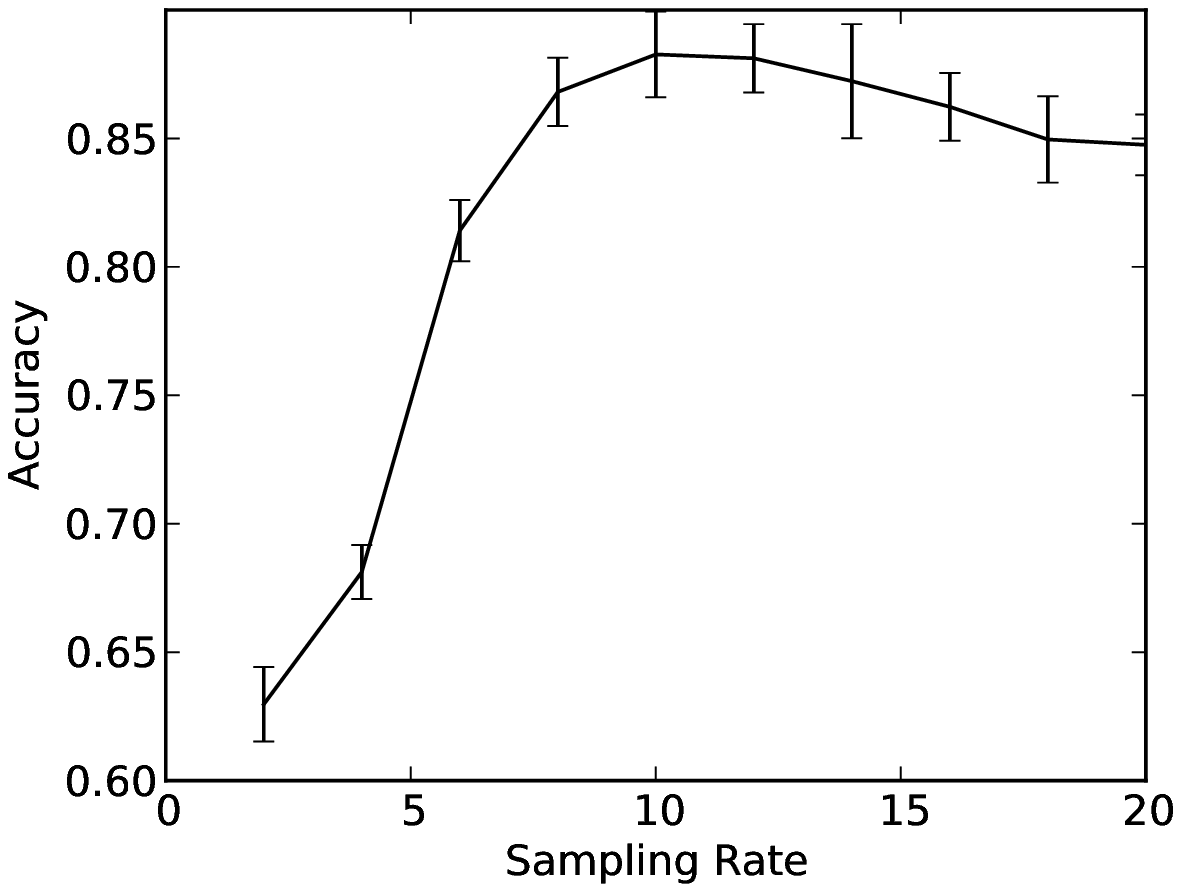}}
  \subfigure[SMS Sampling Rate]{\label{fig:sms_samp}\includegraphics[width=0.24\textwidth]{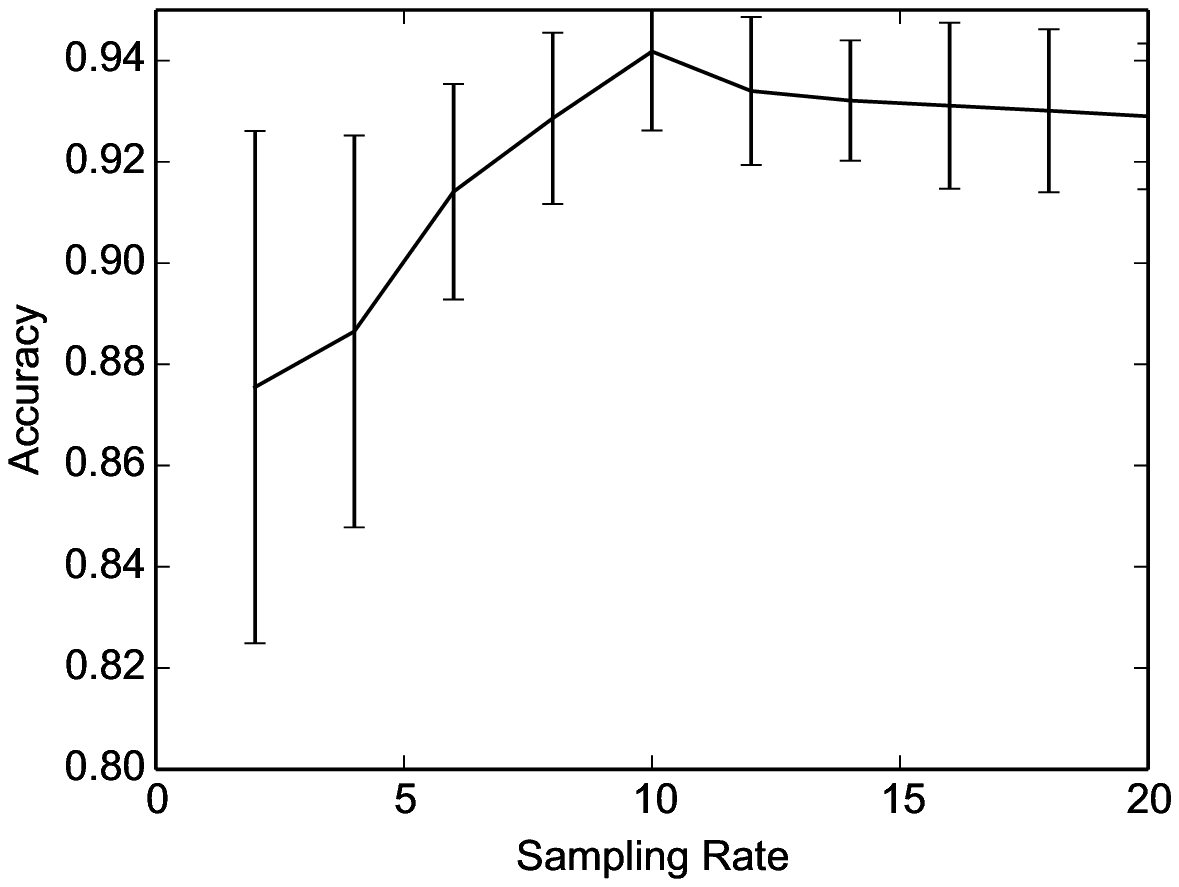}}
  \caption{Tuning the budget split (without sampling) and sampling rate (with a 50\% budget split).}
  \label{fig:tradeoffs}
\end{figure*}

\subsubsection{Parameter Tuning}
\label{sec:other}
In this section we present empirical justification for some of our design choices -- budget split, and sample rate selection. 
We defer the problem of classifier agnostic automatic parameter tuning to future work.

\noindent{\bf Privacy Budget Split:}
We empirically tested the accuracy of the classifier with feature selection under different budget splits. Figures \ref{fig:twit_split_budget} and \ref{fig:sms_split_budget} show (on the \twitter and \sms datasets, resp.) one example of Naive Bayes classification with score perturbation using the $TC$ score function. We see that the best accuracy is achieved when feature selection and classification equally split the budget. Since clustering and PTT have much lower sensitivities we find than a much smaller part of the budget (0.2) is required for these techniques to get the best accuracy (graphs not shown).

\noindent{\bf Splitting data vs Privacy Budget:}
Rather than splitting the privacy budget, one could execute feature selection and classification on disjoint subsets of the data. By parallel composition, one can use all the privacy budget for both tasks. However, experiments on the Naive Byes classifier with PTT showed splitting the data resulted in classifiers whose average accuracy was very close to that of the majority classifier. Since feature selection is run on a slightly different dataset,  wrong features are being chosen for classifier training. 

\noindent{\bf Sampling Rate Selection:}
Figures \ref{fig:twit_samp} and \ref{fig:sms_samp} show the change in system accuracy as the sampling rate $r$ is changed for the \sms and \twitter data sets. We see that selecting alow sampling rate is detrimental since too much information is lost. Alternatively selecting
a sampling rate that is too high loses accuracy from increasing the sensitivity used when drawing noise for privacy. A moderate rate of sampling that preserves enough information while reducing the required global sensitivity for privacy will do best.

\noindent{\bf Total Budget Selection}
Figure \ref{fig:total-budget} shows the accuracy of the private classifier with the best private score perturbation, clustering and PTT feature selection algorithm under different settings of the total privacy budget ($\epsilon = 1, 0.5, 0.25$ and $0.1$).
The same settings for budget split and sampling are held throughout. For the \twitter and \reuters datasets (which are harder to
predict) we see the accuracy begin to approach the majority classifier as the total budget is reduced to 0.1.

\eat{\subsubsection{Splitting Data Instead of the Budget}
A caveat to differential privacy is that you can use the same budget for multiple
tassks if they are performed on fully-disjoint data sets. This seems a viable alternative to 
splitting a limited budget into portions for features selection and then model training.

We ran experiments to explore this alternative to budget splitting 
and found that since the model is highly dependent on the features chosen,  
splitting the data for selection and then model training usually 
does not work well. For these 
tests we used PTT (our most stable selection method), the Naive Bayes classifier, and a total
budget of 1.0. In all cases, the split-data private 
system performed on average about as well as the majority 
classifier, but with an extremely high standard deviation due to the volatility of the model
when the selected features don't match cleanly with the features in the model training data.

\subsubsection{The Adult Data Set}
A standard data set in many private classification papers is the Adult Data set.
To match this data set to the binary classifiers we were using in this paper, we
used the pre-binarized version available from Chang et al. in LIBSVM \cite{CC01a} (specifically variant a9a).
This increases the feature space to 123 total binary features. 
We found that the accuracy was fairly high for both tested
classification models, specifically when using all of the budget for model training without selection.

Splitting the budget in this case did not hurt the overall system accuracy, but it also did not
give any notable gains. This indicates to us that the Adult dataset has been a good test-ground
for starting private classification work as the relatively small feature space allows for easier 
classification tasks.
}

%
\subsection{Private Evaluation}\label{sec:rocexp}
We use the held out test sets of the \sms and \twitter datasets which come from the previous section. \sms test set contains 558 data, and each tuple $t$ has a true label $t[L] \in \{0,1\}$ as well as a prediction  $p(t) \in [0,1]$ for the label $1$. In \sms, 481 out of 558 data have true label equals $1$. The \twitter test dataset contains 684 different tuples, 385 of which have label equal to $1$.

Figure \ref{fig:base-roc} shows the real ROC curves as well as the differentially private ROC curves for both \sms and \twitter datasets under 4 different privacy budgets by using \alphamed. Recall that, in \alphamed,    $\epsilon_1$ privacy budget is used for selecting a set of thresholds, and $\epsilon_2 = \epsilon - \epsilon_1$ is used for generating the ROC curve. We set $\epsilon_{1} = 0.2\epsilon$.

\begin{figure*}[t]
\subfigure{
\begin{minipage}[t]{0.25\linewidth}{
\centering
\includegraphics[width = 1.8in]{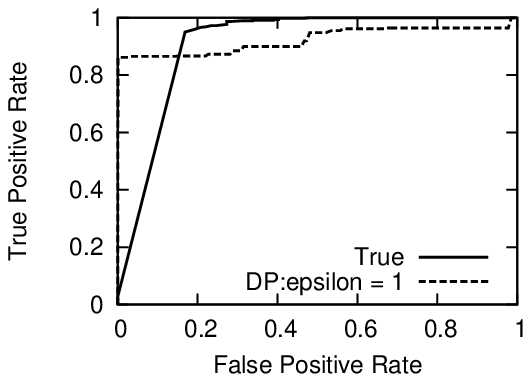}
}
\end{minipage}
\begin{minipage}[t]{0.25\linewidth}{
\centering
\includegraphics[width = 1.8in]{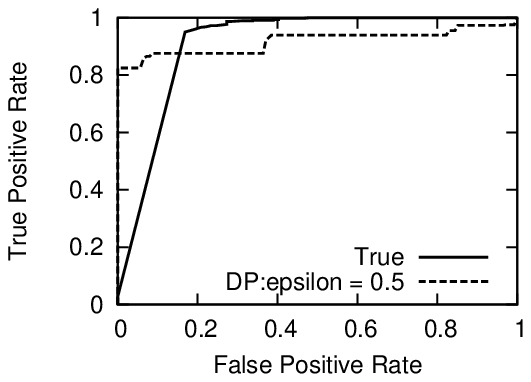}
}
\end{minipage}

\begin{minipage}[t]{0.25\linewidth}{
\centering
\includegraphics[width = 1.8in]{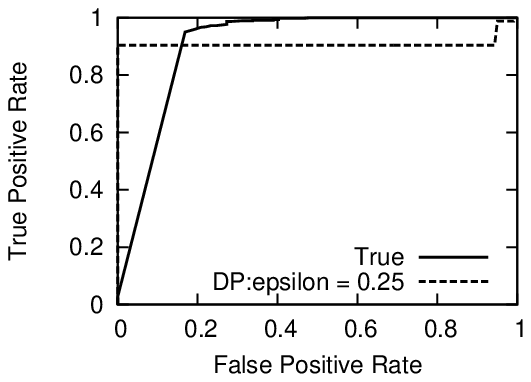}
}
\end{minipage}
\begin{minipage}[t]{0.25\linewidth}{
\centering
\includegraphics[width=1.8in]{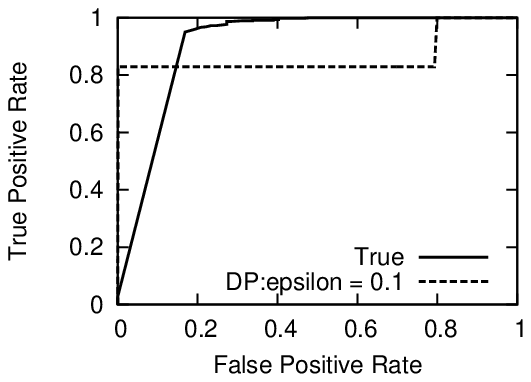}
}
\end{minipage}
}
\subfigure{
\begin{minipage}[t]{0.25\linewidth}{
\centering
\includegraphics[width = 1.8in]{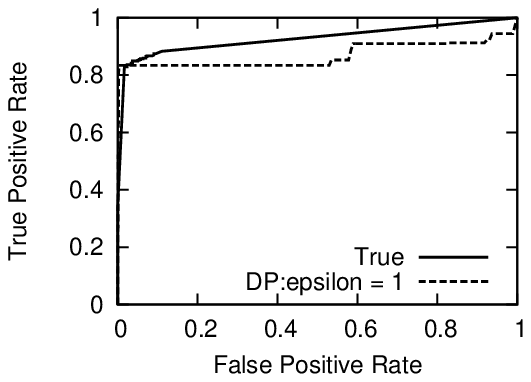}
}
\end{minipage}
\begin{minipage}[t]{0.25\linewidth}{
\centering
\includegraphics[width = 1.8in]{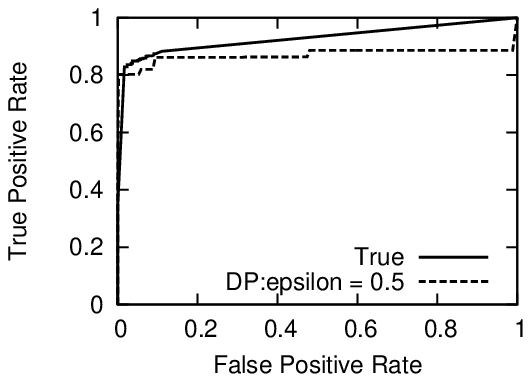}
}
\end{minipage}

\begin{minipage}[t]{0.25\linewidth}{
\centering
\includegraphics[width = 1.8in]{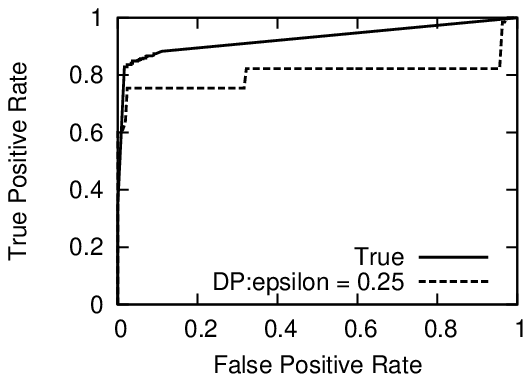}
}
\end{minipage}
\begin{minipage}[t]{0.25\linewidth}{
\centering
\includegraphics[width = 1.8in]{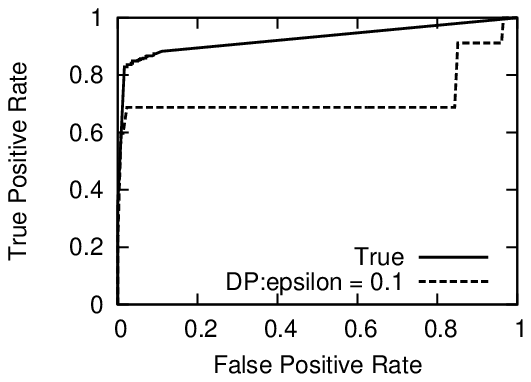}
}
\end{minipage}
}
\vspace{-5mm}
\caption{True \&  Private ROC curves, $\emph{SMS}$ (above) and $\emph{TWITTER}$ (below) datasets, $\epsilon = 1, 0.5, 0.25$ and $0.1$ (left to right)}
\label{fig:base-roc}
\end{figure*}

The solid line refers to the real ROC curve, while the dashed line represents the differentially private ROC curve. When the privacy budget is not small ($\epsilon = 1$), the private ROC curve is very close to the real ROC curve, which means our private ROC curve is a good replacement of the real ROC curve, correctly reflecting the performance of the input classifier.

\begin{table}
\centering
{\small
\begin{tabular}[t]{|c||c|c|c|c|c|c|}
\hline
$\epsilon$ & \multicolumn{6}{c|}{AUC Error}\\
\cline{2-7}
&\multicolumn{2}{c|}{Laplace} & \multicolumn{2}{c|}{\alphafs} & \multicolumn{2}{c|}{\alphamed} \\
\hline
& SMS &TWI & SMS&TWI &SMS & TWI  \\
\hline
1 &0.218 & 0.073 & 0.034 & 0.065 & 0.023 & 0.055  \\
\hline
0.5 & 0.343 & 0.108 & 0.042 & 0.094 & 0.029 & 0.063 \\
\hline
0.25 & 0.372 & 0.211 & 0.079 & 0.151 & 0.054 & 0.102 \\
\hline
0.1 & 0.442 & 0.340 & 0.146 & 0.229 & 0.092 & 0.203 \\
\hline
\end{tabular}
\caption{ROC area $L_{1}$ error (median) for both $\emph{SMS}$ and $\emph{TWITTER}$ datasets based on all possible thresholds}
\label{table:base-areaerror}
}
\end{table}

\begin{figure*}[t]
\subfigure{
\begin{minipage}[t]{0.25\linewidth}{
\centering
\includegraphics[width = 1.8in]{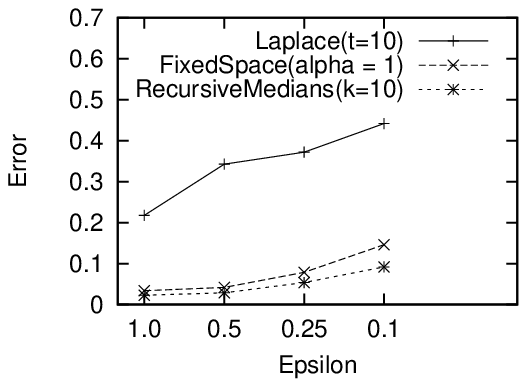}
}
\end{minipage}
\begin{minipage}[t]{0.25\linewidth}{
\centering
\includegraphics[width = 1.8in]{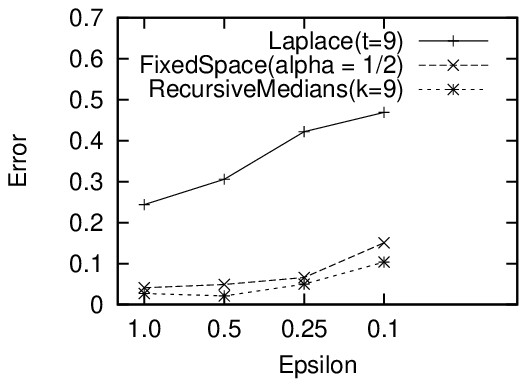}
}
\end{minipage}

\begin{minipage}[t]{0.25\linewidth}{
\centering
\includegraphics[width = 1.8in]{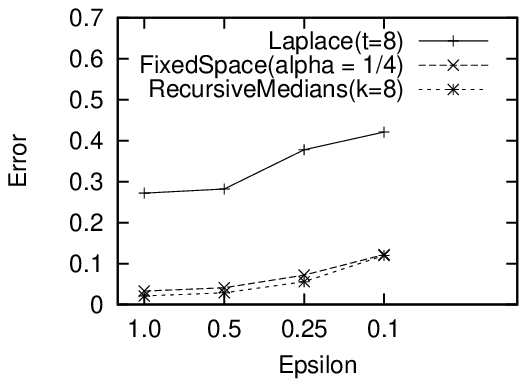}
}
\end{minipage}
\begin{minipage}[t]{0.25\linewidth}{
\centering
\includegraphics[width=1.8in]{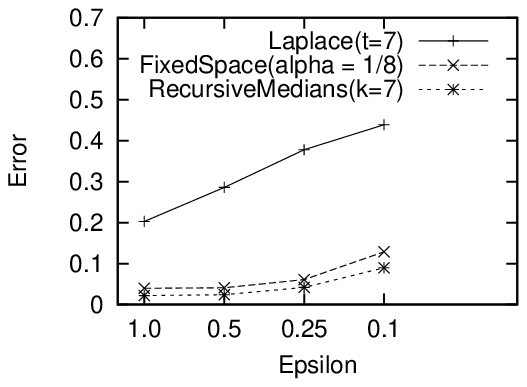}
}
\end{minipage}
}
\subfigure{
\begin{minipage}[t]{0.25\linewidth}{
\centering
\includegraphics[width = 1.8in]{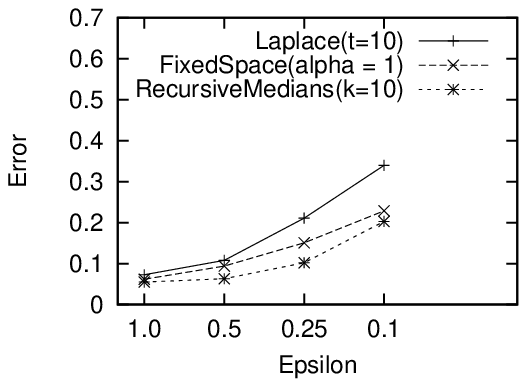}
}
\end{minipage}
\begin{minipage}[t]{0.25\linewidth}{
\centering
\includegraphics[width = 1.8in]{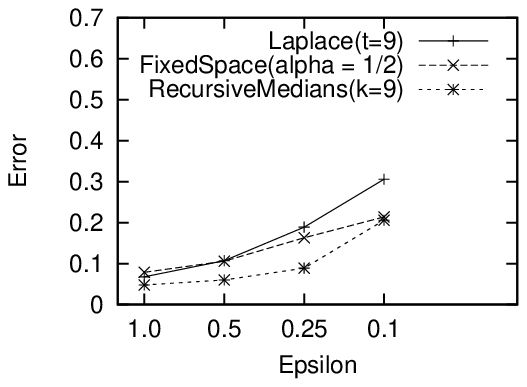}
}
\end{minipage}

\begin{minipage}[t]{0.25\linewidth}{
\centering
\includegraphics[width = 1.8in]{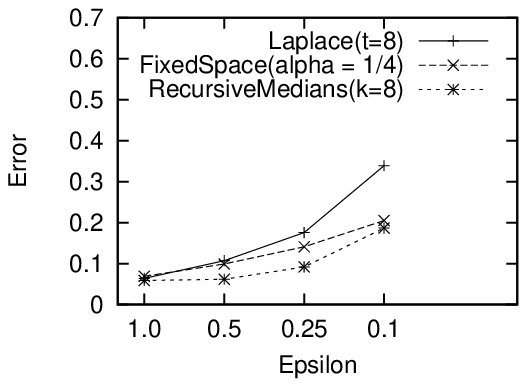}
}
\end{minipage}
\begin{minipage}[t]{0.25\linewidth}{
\centering
\includegraphics[width = 1.8in]{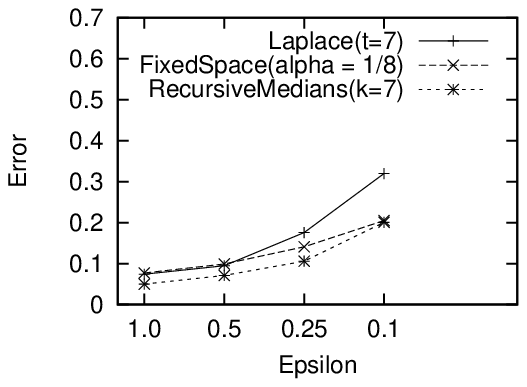}
}
\end{minipage}
}
\vspace{-5mm}
\caption{Error Comparison, $\emph{SMS}$ (above) and $\emph{TWITTER}$ (below) datasets}
\label{fig:area-comparison}
\end{figure*}

Figure \ref{fig:area-comparison} reports the comparison of the errors among three different algorithms. The $\emph{Laplace}$ line refers to the error by directly using $\emph{Laplace Mechanism}$ using $t$ thresholds (that can be chosen based on the data). The $\emph{FixedSpace}$ line shows the error of \alphafs, with $\alpha \cdot n$ thresholds chosen uniformly in $[0,1]$. And the line $\emph{RecursiveMedians}$ presents the error of \alphamed, with $2^k$ thresholds. The x-axis corresponds to $\epsilon$ and the y-axis corresponds to the L1 error of the area between the real ROC curve and the private ROC curve under certain privacy budget. The error shows the median value after running our algorithm 10 times. The reason why we pick the median error instead of the average value is to counter the effect of outliers.

To understand the effect of choosing differing numbers of thresholds, we choose $\alpha = 1, 0.5, 0.25$ and $0.125$. The ensure that the noise introduced is roughly the same in all algorithms, we vary $t$ and $k$ as $10, 9, 8$ and $7$. For instance, for $\alpha =1$ and $t, k = 10$, we have $O(n)$ thresholds for \alphafs and \alphamed ($2^9 \leq n \leq 2^{10}$ for the \sms and \twitter datasets), and $O(\log n)$ thresholds for Laplace.

In figure \ref{fig:area-comparison}, we can see that \alphamed and \alphafs can largely improve the accuracy of the output compared with directly using $\emph{Laplace Mechanism}$ under all $\epsilon$ and $\alpha$ settings. The difference in error is largest for small epsilon. Although the noise scale is the same for the three methods in each experiment, since the number of thresholds is very small the $\emph{LaplaceROC}$ curve can't hope to approximate the true ROC curve well enough. Furthermore, for both datasets, the \alphamed method performs better than \alphafs method under nearly all parameter settings, which means computing noisy quantiles help choose the right set of thresholds. 

Table~\ref{table:base-areaerror} represents the graphs in Figure~\ref{fig:area-comparison} for $t, k = 10$ and $\alpha = 1$ in tabular form. It is interesting to note that \alphamed has 10 times lower error than Laplace for $\epsilon = 1$ on \sms.

\subsubsection{Choosing the number of thresholds}
The value of $\alpha$ in \alphafs and $k$ in \alphamed determines the number of thresholds we will use to compute ROC curve. It will affect three different aspects. First, the bigger size of thresholds, the better we ca hope to approximate the true ROC curve. Second, larger threshold sets result in larger noise being used to perturb $TPRs$ and $FPRs$. Third, the last step of our algorithm is to do postprocessings in order to maintain consistency and its relationship to $\alpha, k$ is not very clear. Our goal is to pick the value of $\alpha, k$ which lead to the best trade off.

\begin{figure}[t]
\subfigure{
\centering
\includegraphics[width = 1.8in]{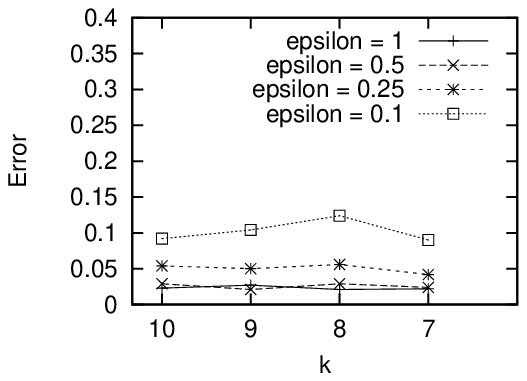}
\centering
\includegraphics[width = 1.8in]{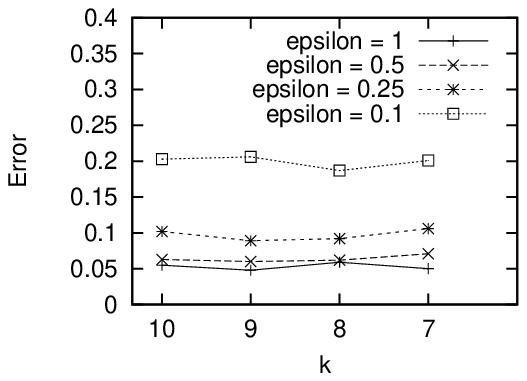}

}
\vspace{-5mm}
\caption{Number of thresholds: $\emph{SMS}$ (left) and $\emph{TWITTER}$ (right) datasets, based on all $\epsilon$}
\label{fig:alpha_comparison}
\end{figure}
Figure \ref{fig:alpha_comparison} presents the comparions of the errors for \alphamed among different choies for $k$ under all $\epsilon$ values. The x-axis show 4 different settings for $k$. We can see that for both datasets, there is no specific setting of $k$ that leads to the best performance of \alphamed  for all $\epsilon$ settings. The graph looks similar for \alphafs (not shown).

Thus, it seems best to set $k = \lceil \log n\rceil$. The following is one possible reason for the AUC error not depending on $k$: One may pick a small number of thresholds to reduce the noise if the true ROC curve can be accurately described using a small number of points. But in this case, the postprocessing step that enforces monotonicity results in error that has a strong dependence on the number of distinct $TPR$ and $FPR$ values, and not the total number of thresholds (see Theorem 2 \cite{boostacc_consistency}). 

\section{Related Work}
\label{sec:related}
\noindent{\bf Differentially Private Classifiers:}
Private models for classification has been a popular area of exploration
for privacy research. Previous work has produced differentially private training algorithms for  
Naive Bayes classification \cite{Vaidya2013},  decision trees \cite{Friedman2010, Jagannathan2012}, logistic regression \cite{Sarwate, zhang13privgene} and support vector machines \cite{Sarwate} amongst others.
Apart from classifier training, Chaudhuri el al. \cite{chaudhuri13:nips} 
present a generic algorithm for differentially private parameter tuning and model selection. However, this work does not assume a blackbox classifier, and makes strong stability assumptions about the training  algorithm. In contrast, our algorithms are classifier agnostic. Additionally 
Thakurtha et al. \cite{thakurta2013differentially} present an algorithm for model selection again assuming strong stability assumptions about the model training algorithm. We would like to note that the work in these paper is in some sense orthogonal to the feature selection algorithms we present, and can be used in conjunction with the results in the paper (for instance, to choose the right threshold $\tau$ or the right number of features to select). 

\noindent{\bf Private Threshold Testing:}
As mentioned before, private threshold testing (PTT) is inspired by the sparse vector technique (SVT) \cite{hardt-svt} which was first used in the context of the multiplicative weights mechanism \cite{Hardt10amultiplicative}. While PTT aims to only release whether or not a query answer is greater than a threshold, SVT releases the actual answers that are above the threshold and thus can only release a constant number of answers. Lee et al \cite{lee14:kdd} solve the same problem as PTT in the context of frequent itemset mining. The  propose an algorithm {\sc noisycut} which we show is inferior to PTT. While the techniques for proving the privacy of all these techniques are similar, our proof for PTT is the tightest thus allowing us only add noise to the threshold and get the best utility (amongst competitors) for answering comparison queries.

\noindent{\bf Private Evaluation:}
Receiver operating characteristic (ROC) curves are used quantifying the prediction accuracy of binary classifiers. However, directly releasing the ROC curve may reveal the sensitive information of the input dataset \cite{privateROC}.
In this paper, we propose the first differentially private algorithm for generating private ROC curves under differential privacy.  Chaudhuri et al \cite{chaudhuri13:nips} proposes a generic technique for evaluating a classifier on a private test set. However, they assume that the global sensitivity of the evaluation algorithm is low.  Hence, their work will not apply to generating ROC curves, since the sufficient statistics for generating the ROC curve (the set of true and false positive counts) have a high global sensitivity. Despite this high sensitivity, we present strategies that can privately compute ROC curves with very low noise by modeling the sufficient statistics as one-sided range queries.

\eat{
In the non-private sphere there are many ways to train a classifier
on labeled data. These include
Naive Bayes classifiers, decision trees, or random forests
to more complex and subtle frameworks such as support vector 
machines, regression, and clustering. Of course in the non-private
setting researchers are able to tweak
and tune their preprocessing and classification methods to achieve 
the highest possible accuracy based on their assumptions and collected 
data.
Still though, for researchers and groups operating outside of computer
science many will opt for prepackaged libraries for their
classification needs. 

In the effort to create private frameworks for classification there
have been advances for 
Naive Bayes models \cite{Vaidya2013} (the classifier for 
this work), decision trees \cite{Friedman2010, Jagannathan2012}, 
and support vector machines \cite{Sarwate} amongst others.
An inherent issue though is that classification, or at least the
tuning of classifiers is an inherently iterative process. In the
private setting though, researchers cannot run their training 
mechanisms over private data repeatedly without splitting
their budget.

Preprocessing steps that allow 
incoming data to be more effectively used
are of great use then. Such steps will ensure better accuracy overall, as
well as potentially reducing the amount of tuning needed
for a training mechanism via the reduction of the training
data's complexity.

In the non-private setting these preprocessing steps include 
data cleaning, normalization, transformation, as well as 
feature extraction. In this paper we discuss
dimensionality reduction via feature selection
with respect to preprocessing for classification (or more
broadly for data mining).
We aim to tackle at least the basics
of what makes a good feature selection method, and how they
can be roughly tuned without dependence on the private data
that classifiers are being trained on. 

For evaluating the performance of a given binary classifier, the Receiver operating characteristic (ROC) curve is a good choice. However, Gregory J. Matthews and Ofer Harel explain in\cite{privateROC} that directly releasing the ROC curve may release sensitive information obout the test dataset if the malicious attacker has certain knowledge about the original dataset. Therefore, in private setting, we have to consider perturb the output ROC curve in order to protect the private date.

Answering range queries for histogram datasets under differential privacy is a common issue in the field of data privacy. A good strategy for doing so is to do the partition of the data first in order to reduce the sensitivity of the worklord. StructureFirst \cite{NoiseFirst} and DAWA \cite{DAWA} are two of the most popular algorithms for solving range queries problems. They both do the private partition first and then add Laplace noise to each bucket in order to satisfy differential privacy. However, both methods work bad when the sensitivity of each element is large compared with the range of its domain. }

\section{Conclusions}\label{sec:conc}
In this paper, we presented algorithms that can aid the adoption of differentially private methods for classifier training on private data.  We present novel algorithms for private feature selection and experimentally show using three real high dimensional datasets that spending a part of the privacy budget for feature selection can improve the prediction accuracy of the classifier trained on the selected features. Moreover, we also solve the problem of privately generating ROC curves. This allows a user to quantify the prediction accuracy of a binary classifier on a private test dataset. In conjunction, these algorithms can now allow a data analyst to mimic typical `big-data' workflows that (a) preprocess the data (i.e., select features), (b) build a model (i.e., train a classifier), and (c) evaluate the model on a held out test set (i.e., generate an ROC curve) on private data while ensuring differential privacy without sacrificing too much accuracy.

\balance

{\small
\bibliographystyle{abbrv}
\bibliography{refs}
}

\clearpage
\appendix
\section{Proof of Theorem 2}\label{sec:PTTproof}
\begin{proof}
For any two neighboring datasets $D$ and $D'$, we would like to show:
\begin{eqnarray*}
\frac{P(v_D \rightarrow \hat{v})}{P(v_{D'} \rightarrow \hat{v})} &\leq &  e^{2\sigma \epsilon}
\end{eqnarray*}
where $v_D$ and $v_{D'}$ denote the outputs of the non private threshold test on $D$ and $D'$ resp., and $\hat{v}$ is the output of PTT.  Let $N^1 = \{ i\in [m]~ | \hat{v}[i] = 1\}$ and $N^0 = \{ i\in [m] ~| \hat{v}[i] = 0\}$ denote the set of $1$ and $0$ answers resp. of PTT. Let $\hat{v}[<i]$ denote the answers returned by PTT for queries $1$ through $i-1$. Then
\begin{eqnarray*}
\lefteqn{\frac{P(v_D \rightarrow \hat{v})}{P(v_{D'} \rightarrow \hat{v})}
\ = \ \prod_{i \in [m]} \frac{P(Q_i(D) = \hat{v}[i] \ | \ \hat{v}[<i])}{P(Q_i(D') = \hat{v}[i] \ | \  \hat{v}[<i])} }\\
&= & \prod_{i\in N^1} \frac{P(Q_i(D) =  1 \ |\  \hat{v}[<i])}{P(Q_i(D') = 1 \ |\ \hat{v}[<i])} 
\times \prod_{i\in N^0} \frac{P(Q_i(D) = 0 \ | \ \hat{v}[<i])}{P(Q_i(D') = 0 \ |\  \hat{v}[<i])}  \\
\lefteqn{\prod_{i\in N^1} P(Q_i(D) =  1 \ |\  \hat{v}[<i])}\\
& = & \int_z P(\tilde{\tau} = z) \prod_{i \in N^1}P(Q_i(D) =  1 \ |\ \tilde{\tau} = z) dz\\
& = & \int_z P(\tilde{\tau} = z) \prod_{i \in N^1}P(Q_i(D) > z) dz
\end{eqnarray*}
The following two facts complete the proof. First, for any $z$, 
\begin{equation}
P(\tilde{\tau} = z)  \ \leq \ e^{\sigma\epsilon} P(\tilde{\tau}  = z - \sigma)
\end{equation}
Second, for neighboring databases $D$ and $D'$, 
\begin{eqnarray}
\nonumber
Q_i(D) \geq z  & \Rightarrow &  Qi(D') \geq z - \sigma\\
	P(Q_i(D)  \geq z)  & \leq & P(Q_i(D')  \geq z - \sigma)
\end{eqnarray}
Therefore, we get: 
\begin{eqnarray*}
\lefteqn{\prod_{i\in N^1} P(Q_i(D) =  1 \ |\  \hat{v}[<i])}\\
& = & \int_z P(\tilde{\tau} = z) \prod_{i \in N^1}P(Q_i(D) > z) dz\\
& \leq & e^{\sigma\epsilon}\int_z P(\tilde{\tau} = z-\sigma) \prod_{i \in N^1}P(Q_i(D') > z-\sigma) dz\\
& = & e^{\sigma\epsilon}\prod_{i\in N^1} P(Q_i(D') =  1 \ |\  \hat{v}[<i])
\end{eqnarray*}
An analogous proof for $N^0$ gives the required bound of $e^{2\sigma\epsilon}$.
\end{proof}

\eat{\section{Proof of Corollary 2}\label{sec:PTTcorproof}
\begin{proof}(sketch)
{\em Case (i) $\tau$ is a constant:} When $D = D' \cup \{t\}$, for all $z$, $Q_i(D) < z$ implies $Q_i(D') < z$. Thus, $r_0 = \prod_{i\in N^0} \frac{P(Q_i(D) = 0 \ |\  \hat{v}[<i])}{P(Q_i(D') = 0 \ |\ \hat{v}[<i])}$ is already bounded above by $1$, while $r_1 = \prod_{i\in N^1} \frac{P(Q_i(D) = 1 \ | \ \hat{v}[<i])}{P(Q_i(D') = 1 \ |\  \hat{v}[<i])}$ is bounded by $e^{\sigma\epsilon}$ from proof of Thoerem~\ref{thm:PTT}. When $D' = D \cup \{t\}$, we have $r_1 < 1$ and $r_0 \leq e^{\sigma\epsilon}$. 

{\em Case (ii) $\tau$ is a function of $D$:} When $D = D' \cup \{t\}$, it still holds that $P(\tilde{\tau(D)} = z)  \ \leq \ e^{\sigma\epsilon} P(\tilde{\tau(D')}  = z - \sigma)$. This is because $\tau(D')$ lies between $[\tau(D)-\sigma, \tau(D)]$. The rest of the proof remains the same. 
\end{proof}
}

\end{document}